\documentclass[journal]{IEEEtran}

\usepackage{graphicx}
\usepackage{array}
\usepackage{url}
\usepackage{stfloats}
\usepackage[tight]{subfigure}
\usepackage{amsmath,bm}
\usepackage{textcomp, gensymb}
\usepackage{amssymb}
\usepackage{amsfonts}
\usepackage{amsopn}
\usepackage{psfrag}
\usepackage{enumitem}
\usepackage{amsthm}
\usepackage{accents}
\usepackage{color}
\usepackage{algorithmicx}
\usepackage{algorithm}
\usepackage{algpseudocode}
\usepackage{acronym}
\usepackage[english]{babel}
\usepackage[utf8]{inputenc}
\usepackage{booktabs}
\usepackage{multirow}
\usepackage{mdwlist}
\usepackage{epstopdf}
\acrodef{COM}[\textsc{COM}]{Center of Mass}
\acrodef{JPS}[\texttt{JPS}]{Jump Point Search}
\acrodef{SDP}[\textsc{SDP}]{Semidefinite Program}
\acrodef{RRT}[\texttt{RRT}]{Rapidly-exploring Random Tree}
\acrodef{CBF}[\textsc{CBF}]{Control Barrier Function}
\acrodef{MPC}[\textsc{MPC}]{Model Predictive Control}
\acrodef{QP}[\textsc{QP}]{Quadratic Program}
\acrodef{SC}[\textsc{SC}]{Safe Corridor}
\acrodef{SWC}[\textsc{SWC}]{\emph{Safe Walking Corridors}}
\acrodef{RISP}[\texttt{RISP}]{\emph{Randomized Iterative Space Partitioning}}
\acrodef{IRIS}[\texttt{IRIS}]{Iterative Regional Inflation by Semidefinite Programming}
\acrodef{UAV}[\textsc{UAV}]{Unmanned Aerial Vehicle}
\acrodef{FIRI}[\texttt{FIRI}]{Fast Iterative Region Inflation}
\acrodef{CIRI}[\texttt{CIRI}]{Configuration-Space Iterative Regional Inflation}
\acrodef{AGV}[\textsc{AGV}]{Autonomous Ground Vehicles}

\newtheorem{rem}{\bf Remark}
\newtheorem{assump}{\bf Assumption}
\newtheorem{theorem}{\bf Theorem}
\newtheorem{lemma}{\bf Lemma}

\begin{document}

\title{\LARGE \bf \texttt{PathCover}: A Fast Convex Decomposition along a Path via Randomized Iterative Space Partitioning (\texttt{RISP}) on Point Clouds}

\author{Kunal S. Narkhede$^{1}$, Abhijeet M. Kulkarni$^{1}$, Guoquan Huang$^{1}$ and Ioannis Poulakakis$^{2}$
\thanks{$^1$K. S. Narkhede, A. M. Kulkarni and G. Huang are with the Department of Mechanical Engineering, University of Delaware, Newark, DE 19716, USA {\tt\small \{kunalnk, amkulk, ghuang\}@udel.edu}}%
\thanks{$^{2}$I. Poulakakis is with Robotics Institute, Athena Research Center, Marousi, Greece; School of Mechanical Engineering, National Technical University of Athens, Greece; HERON–Center
of Excellence in Robotics, Athens, Greece {\tt\small poulakas@mail.ntua.gr,
i.poulakakis@athenarc.gr}}
}

\maketitle

\begin{abstract}
Autonomous robot navigation requires the rapid generation of obstacle-free regions for trajectory planning. However, existing corridor generators struggle to meet real-time, sensor-rate computational constraints. To resolve this bottleneck, we introduce \texttt{PathCover}, a framework driven by \ac{RISP}; a novel randomized algorithm that constructs convex polytopes directly from raw point cloud data in expected linear time under a mild probabilistic elimination condition. \texttt{PathCover} generates sequences of overlapping, obstacle-free polytopes that safely constrain downstream \ac{MPC} and trajectory optimization. We mathematically guarantee that the algorithm terminates in finite steps while ensuring continuous progress along any obstacle-free reference path. Extensive benchmarks on synthetic and real-world LiDAR datasets demonstrate an order-of-magnitude speedup over state-of-the-art methods while maintaining comparable corridor volumes. The complete pipeline is validated via high-fidelity quadrotor simulations and physical deployment on a quadrupedal robot navigating constrained environments using live LiDAR perception.
\end{abstract}

\renewcommand{\abstractname}{Note to Practitioners}
\begin{abstract}
This paper is motivated by a practical, recurring challenge: keeping autonomous
robots moving smoothly and reliably through unknown, cluttered environments. To
navigate safely, a robot's onboard computer must continuously convert raw
sensor data into obstacle-free regions. When this computation is too slow, or
its timing is unpredictable, the robot must slow down, hover, or stop to avoid
collisions, which directly reduces its productivity and usefulness. We address
this bottleneck with a new method that constructs obstacle-free regions
substantially faster, and more consistently, than existing approaches. The key
idea is a fast, randomized procedure that operates directly on raw sensor
measurements, such as 3D points or grid cells, to carve out safe regions in a
time that scales predictably with the amount of data. This predictability lets
the robot's control system react in real time, enabling continuous motion
through tight spaces without pausing. The method also rests on a single, simple
geometric operation rather than a complex optimization pipeline, so its working
principle is easy to understand, implement, debug, and trust in the field. It
does produce more conservative regions than optimization-intensive methods, but
in exchange offers an order-of-magnitude speed increase suited to real-time use.
Because it functions as a drop-in module that does not disrupt the rest of the
motion-planning stack, it applies broadly, from drones, wheeled, and legged
robots to autonomous driving and warehouse automation. The source code of the full repository: https://github.com/kunalnk123690/PathCover.git.
\end{abstract}

\section{Introduction}
\label{sec:intro}

\IEEEPARstart{A}{utonomous} navigation through cluttered, unstructured environments is a fundamental requirement for modern automation. The same need recurs, in nearly identical form, in autonomous urban driving~\cite{How2009TCST}, legged robot locomotion~\cite{Havoutis2020RAL}, aerial manipulation~\cite{Kim2018RAL}, \ac{AGV} operation in logistics~\cite{Gao2025Universal}, and multi-agent quadrotor coordination~\cite{Park2023TRO, Mora2023RAL}. Despite large differences in dynamics, payload, and operating envelope, these systems share a common constraint: live sensor data must be converted into guaranteed obstacle-free geometric constraints, and this conversion lies on the critical path of the onboard perception--planning--control loop. When it exceeds the control timing budget, or varies substantially from cycle to cycle, the system must revert to conservative fallback behavior such as drastic deceleration or emergency hovering~\cite{How2022FASTER}. For deployment-grade automation, therefore, free-space generation is not a pre-processing step to be optimized in isolation; it is the link that determines whether continuous perception translates into
safe, uninterrupted motion.

\begin{figure}[t!]
\centering
{\includegraphics[width=\columnwidth]{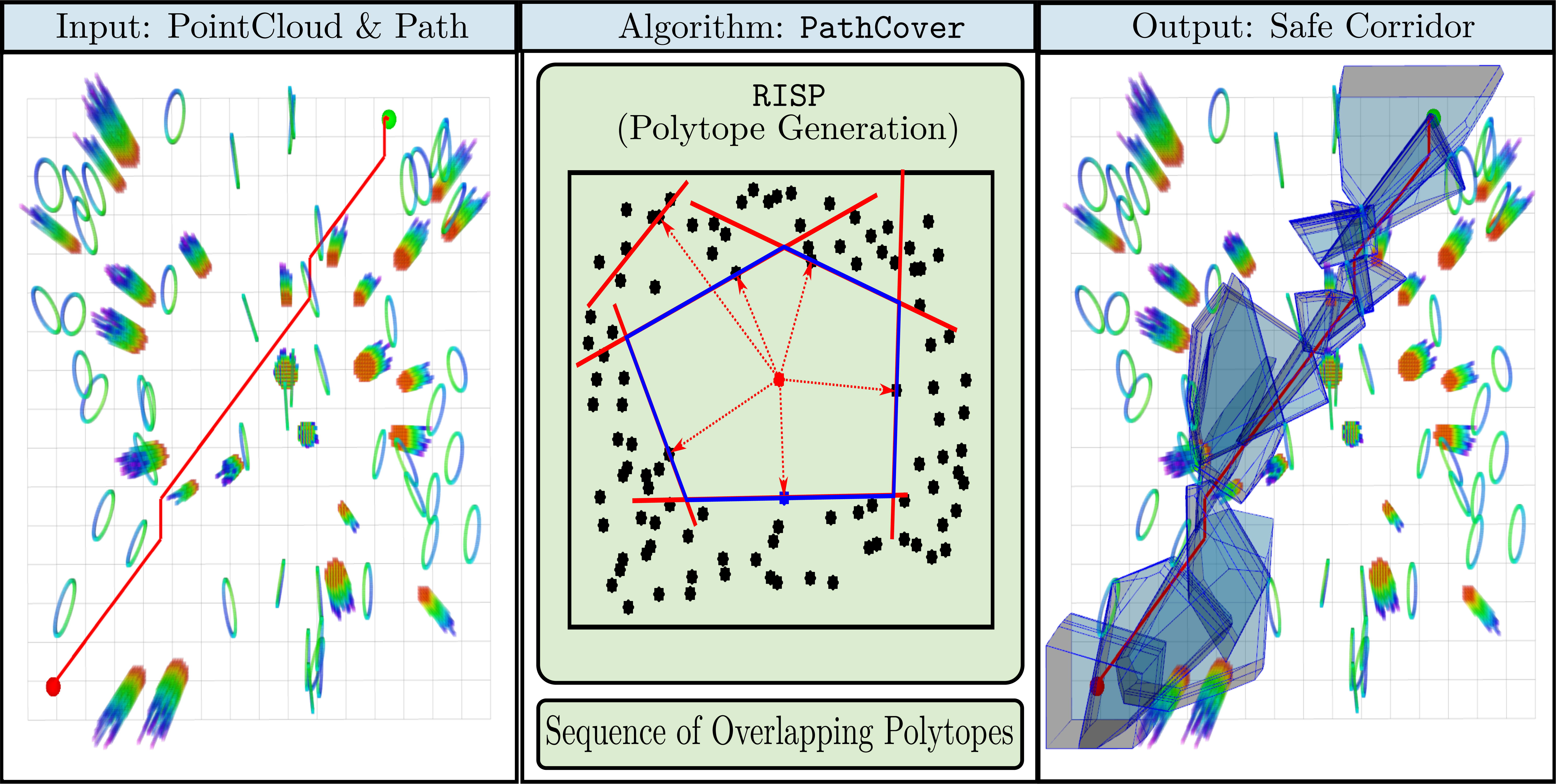}}
\vskip -5pt
\caption{Overview of \texttt{PathCover}: from a raw point cloud
and a reference path (red) connecting the robot to the goal
(green), the method builds overlapping convex polytopes whose
union forms a corridor covering the path.}
\label{fig:RISP_overview}
\vskip -10pt
\end{figure}

Given this constraint, the concept of \emph{\ac{SC}}; a sequence of overlapping convex polytopes; has become the standard geometric abstraction in robot motion planning~\cite{Narkh2022RAL, Gao2025TRO}. It decomposes obstacle avoidance into three tractable stages: a global planner that computes a path to the goal, a corridor generator that encloses that path in convex regions~\cite{Deits2015mixedInteger,Liu2017RAL}, and a trajectory optimizer that synthesizes dynamically feasible motion within these regions. The final stage is well established, commonly implemented through mixed-integer programming~\cite{Deits2015mixedInteger}, polynomial spline generation~\cite{Liu2017RAL, Park2022RAL}, or receding-horizon control that treats the corridor as a time-varying state constraint~\cite{How2022FASTER, Gao2022bubble}. This architecture has been widely adopted because it replaces obstacle avoidance, a combinatorially hard problem, with a convex optimization over a polytopic free space, enabling real-time planning on commodity embedded hardware. This decomposition also reveals where the pipeline's performance is ultimately determined. Once the optimization reduces to a well-posed convex problem, the throughput, reliability, and responsiveness of the overall system are governed primarily by a single upstream component: the size, coverage, and generation speed of the corridor.

In practice, corridor-generation algorithms often fail to keep pace with sensor-rate replanning. Optimization-centric methods produce large-volume, high-quality free-space but at significant computational cost. \ac{IRIS}~\cite{Deits2015IRIS} inflates ellipsoids via \ac{SDP} toward maximal coverage, yet its iterative inflation is too slow for dynamic environments. Configuration space extensions~\cite{Tedrake2023IRISNLP, Tedrake2024IJRR, Tedrake2024IRISZO} improve the results but retain the expensive optimization inner loop. \ac{FIRI}~\cite{Gao2025TRO} reformulates the geometry yet remains limited by repeated \ac{SDP} evaluations. \ac{CIRI}~\cite{ren2025CIRI} carries this line into the robot configuration space and, distinctively, operates directly on raw point clouds without prior map inflation, but it inherits FIRI's iterative conic-optimization inner loop and thus the same per-cycle latency. Parallel efforts to optimize the convex cover directly face the same latency limits~\cite{Kumar2025RAL}.
Optimization-free methods trade this cost at the expense of other factors. Decomp~\cite{Liu2017RAL} avoids optimization by inflating one polytope per piecewise-linear segment, but this couples the polytope count to the path resolution and fragments the corridor. Stereographic-projection and convex-hull constructions~\cite{Sergei2017stereo, Gao2020pointcloud, Gao2022ICRA} require processing the full point cloud~\cite{quickhull}, and segment-sorting schemes~\cite{Kumar2018JMR} scale poorly as sensor density increases. Across these frameworks, achieving runtime behavior that is sufficiently predictable for reliable synchronization with a high-rate control cycle remains challenging.

A parallel line of work relies on voxel-based decomposition~\cite{Gao2020TeachRepeat, Lambert2022VoxelGrid}, aggregating raw sensor streams into structured volumetric maps, such as OctoMaps~\cite{hornung13auro}, and extracting corridors through grid-level geometry~\cite{Gao2018ICRA}. For global planning and multi-query reuse, such maps are highly valuable. Over long missions, however, this very strength becomes a drawback: as occupied cells accumulate during exploration, corridor-extraction latency scales with the aggregate global map. Fine resolutions, which are needed to capture narrow regions, often require power-intensive GPU acceleration~\cite{Lambert2021GPU}, whereas more coarse resolutions may fail to properly represent these regions. As a result, the longer a mission runs, the larger the map grows and the slower corridor generation becomes introducing a bottleneck that limits the scalability of long-term autonomous deployments.

To resolve this computational bottleneck, we propose a formulation in which free-space geometric construction scales linearly with the size of the local input point cloud. Whether the environmental data comes from raw depth sensors or a subset of occupied voxels, constructing a (strictly) separating polytope of free space from a set of obstacle points can be performed in linear time. We realize this objective with a novel randomized construction algorithm that iteratively samples obstacle points, constructs separating hyperplanes, and rapidly prunes eliminated points. For structured point distributions typical in real-world settings, each random sample eliminates a significant fraction of the remaining dataset, inducing a geometric decay in point reduction. This yields an expected $O(n)$ construction time, avoiding complex optimization loops, iterative inflation-based procedures, and dense convex-hull preprocessing.
In particular, motivated by the requirements of robust, deployable automation, this paper makes the following contributions:
\begin{itemize}
    \item \textbf{\texttt{RISP}}, a novel randomized algorithm that constructs obstacle-free convex polytopes directly from finite point sets, with an expected running time of $O(n)$ and a worst-case bound of $O(n^2)$;
    \item \textbf{\texttt{PathCover}}, a complete path-coverage framework that applies \texttt{RISP} iteratively to generate a \ac{SC}, together with a proof of finite-time termination and complete coverage of any obstacle-free reference path;
    \item A comprehensive comparison against state-of-the-art \ac{SC} generators~\cite{Liu2017RAL, Gao2025TRO, ren2025CIRI, Gao2020pointcloud, Deits2015IRIS}, showing an order-of-magnitude reduction in computation time at comparable corridor volume and with low geometric variance, an important property for real time motion planning;
    \item Empirical validation in dynamic Gazebo
    simulations and real-world LiDAR data, confirming that the expected $O(n)$ behavior holds across structured sensor data, enabling real-time operation on a simulated quadrotor and on a physical Ghost Robotics Vision60\footnote{https://www.ghostrobotics.io/vision-60} quadruped navigating constrained environments.
\end{itemize}

\section{Path Coverage via Safe Corridors}
\label{sec:path_coverage}

Consider a robot navigating within a workspace $\mathcal{W} \subset \mathbb{R}^d$ (where $d=2$ or $3$) filled with obstacles $\mathcal{O} \subset \mathcal{W}$ and let $\mathcal{F} =\mathcal{W} \setminus \mathcal{O}$ be the obstacle-free space. The robot travels from initial position $\mathrm{y}_{\rm init} \in \mathcal{F}$ to goal position $\mathrm{y}_{\rm goal} \in \mathcal{F}$ and perceives obstacles in $\mathcal{O}$ through sensor information as a finite point cloud:
\begin{equation}\label{eq:point_cloud}
    \mathcal{P} = \left\{ {\rm p}_i = \begin{bmatrix} x_i & y_i & z_i \end{bmatrix}^{\top} \in \partial\mathcal{O} \mid i = 1, 2, \dots, n \right\}
\end{equation}
where each point ${\rm p}_i$ represents a sampled location on the boundary $\partial\mathcal{O}$ of $\mathcal{O}$, typically obtained from 3D sensors such as LiDAR, RGB-D cameras, or stereo vision systems.

\textbf{Path Coverage Problem:} 
Given a collision-free collection of waypoints $\Pi = \{\mathrm{y}_0, \ldots, \mathrm{y}_L\} \subset \mathcal{F}$ defining a piecewise linear path $\Gamma(\Pi)$ from the robot's initial position $\mathrm{y}_0 = \mathrm{y}_{\rm init}$ to the goal (local or global) $\mathrm{y}_L = \mathrm{y}_{\rm goal}$, and a point cloud $\mathcal{P}$ representing obstacles, generate $J \geq 1$ sequentially intersecting polytopes\footnote{Terminology: A polytope is defined as the intersection of finitely many closed half-spaces with the additional requirement that it be bounded. By definition, polytopes are convex and compact.} $\mathcal{G} = \{\mathcal{H}_1, \ldots, \mathcal{H}_J\}$,
\begin{equation}\label{eq:poly-def}
    \mathcal{H}_j = \left\{ \mathrm{y} \in \mathbb{R}^d \mid A_j\mathrm{y} \leq b_j,~A_j \in \mathbb{R}^{n_j \times d}, b_j \in \mathbb{R}^{n_j} \right\}
\end{equation}
with $n_j \geq d+1,~ j \in \{1,\ldots,J\}$ satisfying the following \emph{path coverage conditions} (PCC): 
\begin{enumerate}[label=PCC.\arabic*),leftmargin=*, labelsep=0.5em]
    \item \label{PCC:1} each polytope $\mathcal{H}_j$, $j \in \{1,\ldots,J\}$, is strictly separated from every point in $\mathcal{P}$;
    \item \label{PCC:2} $\mathcal{H}_j \cap \mathcal{H}_{j+1} \neq \emptyset$, $j \in \{1,\ldots,J-1\}$ (sequentially intersected polytopes)
    \item \label{PCC:3} $\Gamma(\Pi) \subset \bigcup_{j=1}^{J} \mathcal{H}_j$ (the path is covered)  
\end{enumerate}
These sequentially intersecting polytopes form a \ac{SC} and define obstacle avoidance constraints in trajectory optimization frameworks~\cite{Deits2015mixedInteger, Liu2017RAL, Gao2018ICRA}.

\subsection{The PathCover Algorithm}
\label{subsec:PathCover_alg}

Algorithm~\ref{alg:PathCover} outlines the \texttt{PathCover} algorithm for constructing safe corridors along a predefined reference path, represented as a collection of waypoints $\Pi = \{\mathrm{y}_0, \ldots, \mathrm{y}_L\} \subset \mathcal{F}$ within a point cloud environment $\mathcal{P} \subset \mathcal{W}$. The reference path can be generated using any sampling-based or graph-based planning method~\cite{LaValle2006Planning}; in our implementation, we use \ac{JPS}~\cite{Harabor2011jps}. The algorithm generates a sequence of polytopes $\mathcal{G}$ that safely enclose the reference path for use in downstream motion planning.

\begin{algorithm}[b!]
\caption{$\texttt{PathCover}(\Pi$, $\mathcal{P}, \mathtt{max\_iter}, \alpha)$}\label{alg:PathCover}
\begin{algorithmic}[1]
\State \textbf{Inputs}: $\Pi=\{\mathrm{y}_0,..., \texttt{y}_L\}$, $\mathcal{P} \in \mathcal{O}$, $\alpha \in (0, 1)$
\State \textbf{Outputs}: Polytope list $\mathcal{G}$
\State $\mathcal{H}_{\rm new} \gets \texttt{RISP}(\mathrm{y}_0,~\mathcal{P}, \alpha)$, $\ell \gets 0$
\State Initialize list: $\mathcal{G}.\texttt{append}(\mathcal{H}_{\rm new})$
    \While{$\mathrm{y}_{\rm goal} \notin \mathcal{H}_{\rm new}$ OR $\ell < L$} 
        \If{$\mathrm{y}_\ell \in \mathcal{H}_{\rm new}$}
            \State $\mathrm{y}_{\rm in} \gets \mathrm{y}_{\ell}$
            \State $\ell \gets \ell+1$
            \State \textbf{continue}
        \Else
            \If{$\mathtt{size}(\mathcal{G}) = \mathtt{max\_iter}$}
                \State \textbf{break}
            \EndIf
            \State $\mathrm{y}_{\rm in} \gets \texttt{PolyLineIntersect}(\mathcal{H}_{\rm new}, \mathrm{y}_{\rm in}, \mathrm{y}_{\ell})$
            \State $\mathcal{H}_{\rm new} \gets \texttt{RISP}(\mathrm{y}_{\rm in}, \mathcal{P}, \alpha)$
                \State $\mathcal{G}.\texttt{append}(\mathcal{H}_{\rm new})$
        \EndIf
    \EndWhile
\State \Return $\mathcal{G}$
\end{algorithmic}
\end{algorithm}
\begin{algorithm}[b!]
\caption{$\texttt{RISP}(\mathrm{y}_{\rm seed}$, $\mathcal{P}$, $\alpha)$}\label{alg:risp}
\begin{algorithmic}[1]
\State \textbf{Input}: $\mathrm{y}_{\rm seed}$, $\mathcal{P}$, $\alpha \in (0, 1)$
\State \textbf{Output}: $\mathcal{H}$
\While{$\mathcal{P}$ is not empty}
    \State Randomly select a point $\mathrm{p}$ from $\mathcal{P}$
    \State Compute $\mathrm{a}$ and $\mathrm{b}$ according to \eqref{eq:plane_selection}
    \For{$\forall \mathrm{q} \in \mathcal{P}$}
        \If{$\mathrm{a}^\top \mathrm{q} > \mathrm{b}$}
            \State Remove $\mathrm{q}$ from $\mathcal{P}$
        \EndIf
    \EndFor
    \State $\mathcal{H}.\texttt{append}((\mathrm{a}, \mathrm{b}))$
\EndWhile
\State Append workspace boundary constraints to $\mathcal{H}$
\State Remove redundant hyperplanes from $\mathcal{H}$ \\
\Return $\mathcal{H}$
\end{algorithmic}
\end{algorithm}

The algorithm begins by invoking the novel \texttt{RISP} function (Algorithm~\ref{alg:risp}, detailed in Section~\ref{sec:RISP}) at the starting point $\mathrm{y}_0$ to generate the initial convex polytope $\mathcal{H}_{\rm new}$ that is strictly separated from the points in $\mathcal{P}$. This polytope is appended to the corridor list $\mathcal{G}$, and the algorithm proceeds further by iterating over waypoints in the reference path. The main loop continues until the final waypoint $\mathrm{y}_L$ lies within the most recently generated polytope $\mathcal{H}_{\rm new}$ or until a maximum number of iterations is reached. At each iteration, the algorithm processes the waypoints in $\Pi$ and checks whether the current waypoint $\mathrm{y}_\ell$ lies within $\mathcal{H}_{\rm new}$. If so, the waypoint is already covered, the index is incremented to the next waypoint, and the loop proceeds. If $\mathrm{y}_\ell$ lies outside $\mathcal{H}_{\rm new}$, the algorithm computes the intersection point between the current corridor and the line segment connecting the last known interior point $\mathrm{y}_{\rm in}$ and the current point $\mathrm{y}_\ell$ using the \texttt{PolyLineIntersect} function. This intersection point serves as the seed for constructing the next polytope in the corridor, which is generated using \texttt{RISP} and added to $\mathcal{G}$. The end result is a sequence of overlapping polytopes that collectively cover the reference path while remaining in collision-free regions. Fig.~\ref{fig:corridor_generation_path} illustrates this incremental process.

\begin{figure*}[t!]
\vskip +5pt
\centering
\subfigure[]{\includegraphics[width=0.23\textwidth]{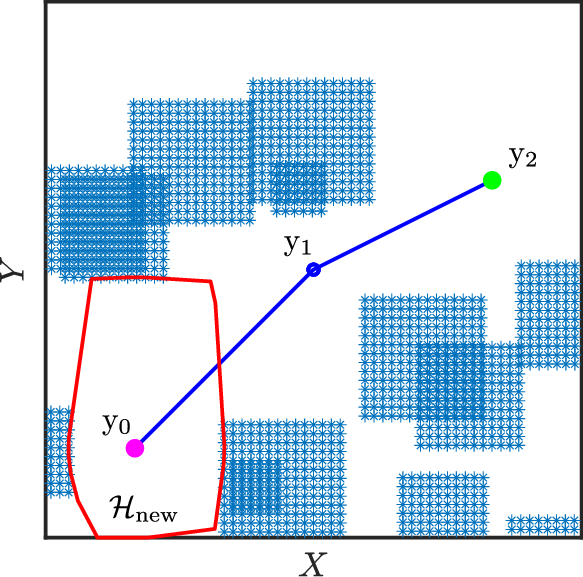}
\label{fig:corridor_generation_path_1}}
\centering
\subfigure[]{\includegraphics[width=0.23\textwidth]{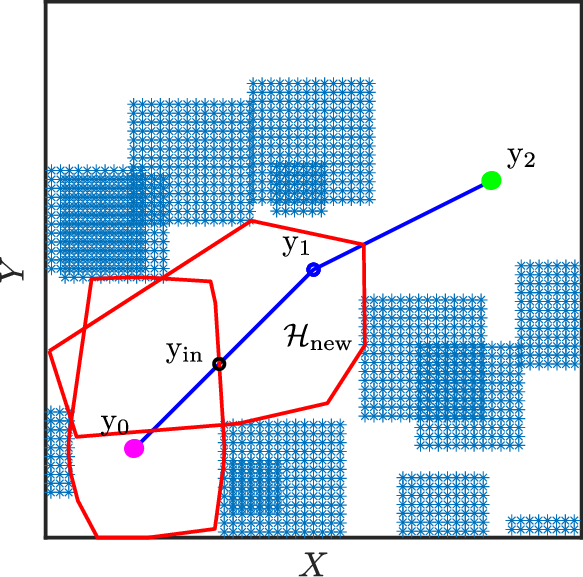}
\label{fig:corridor_generation_path_2}}
\centering
\subfigure[]{\includegraphics[width=0.23\textwidth]{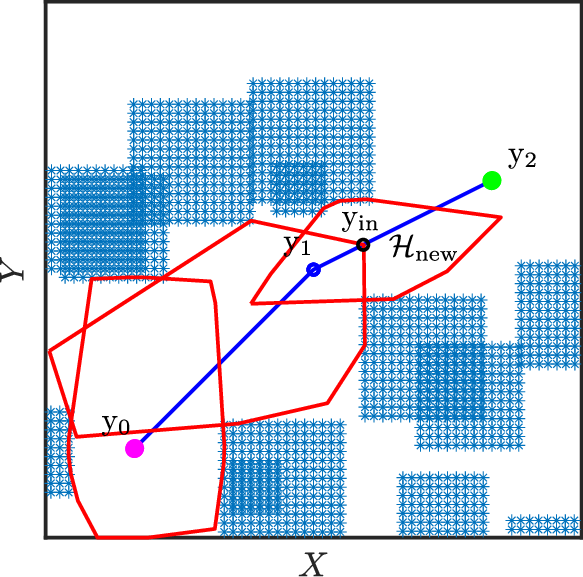}
\label{fig:corridor_generation_path_3}}
\centering
\subfigure[]{\includegraphics[width=0.23\textwidth]{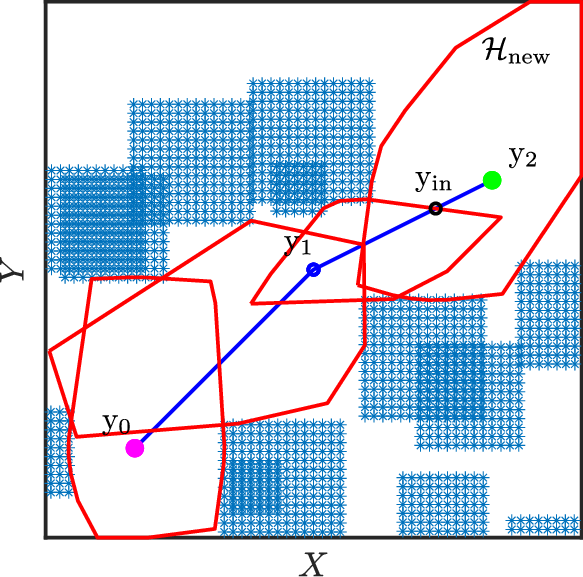}
\label{fig:corridor_generation_path_4}}
\vskip -5pt
\caption{Incremental application of \texttt{RISP} along a path to generate corridor using Algorithm~\ref{alg:PathCover}. The path (blue) is $\Pi = \{\mathrm{y}_0, \mathrm{y}_1, \mathrm{y}_2\}$. The algorithm begins with $\mathrm{y}_0$ as seed to generate the first polytope $\mathcal{H}_{\rm new}$ (a). It then computes the intersection point $\mathrm{y}_{\rm in}$ between the polytope and the line segment connecting $\mathrm{y}_1$ and $\mathrm{y}0$, uses this point as a seed, and generates another polytope (b–c). The algorithm continues until the goal is inside the last generated polytope (d).} 
\label{fig:corridor_generation_path}
\vskip -15pt
\end{figure*}

\subsection{The \ac{RISP} Algorithm}
\label{sec:RISP}

Here we describe the \texttt{RISP} algorithm (Algorithm~\ref{alg:risp}). \texttt{RISP} is a randomized linear-time algorithm that constructs convex polytopes directly from unordered point cloud data. This is done by iteratively sampling and eliminating points from the point cloud until no such points remain around a given seed point $\mathrm{y}_{\rm seed}$. In more detail, at each iteration, the algorithm samples an obstacle point $\mathrm{p}$ from the point cloud $\mathcal{P}$ and constructs the hyperplane 
\begin{equation} \label{eq:hyperplane}
    h = \{\mathrm{x} \in \mathbb{R}^d \mid \mathrm{a}^{\top}\mathrm{x} = \mathrm{b},~ \mathrm{a} \in \mathbb{R}^d,~ \mathrm{b} \in \mathbb{R} \}
\end{equation}
with 
\begin{equation} \label{eq:plane_selection}
    \mathrm{a} = \mathrm{p} - \mathrm{y}_{\rm seed} ~~~\text{and}~~~
    \mathrm{b} = \mathrm{a}^\top \mathrm{y}_{\rm seed} + (1-\alpha) \| \mathrm{a} \|^2_2 \enspace.
\end{equation}
By construction, the hyperplane $h$ is orthogonal to the vector $\mathrm{a}$ connecting the seed point $\mathrm{y}_{\rm seed}$ with the sampled point $\mathrm{p}$ and passes through the point $\mathrm{y}_\star = \alpha \mathrm{y}_\mathrm{seed} + (1-\alpha) \mathrm{p}$, where $\alpha \in (0,1)$ is a small parameter selected by the user. Furthermore, $h$ strictly separates the sampled point $\mathrm{p}$ from the (open) ball of radius $\rho = (1-\alpha) \|\mathrm{a}\|_2$ that is centered at the seed point $\mathrm{y}_\mathrm{seed}$. The algorithm then discards all obstacle points that lie on the side of $h$ opposite to $\mathrm{y}_{\rm seed}$. 

This procedure repeats by selecting a new obstacle point from the \emph{remaining} points in $\mathcal{P}$ until no points remain, thus yielding a set of half-space constraints that collectively separate $\mathrm{y}_{\rm seed}$ from the obstacle point cloud $\mathcal{P}$. Finally, the boundary constraints of the workspace $\mathcal{W}$ are appended and redundant constraints are removed. The latter is done by first mapping each half-space $\mathrm{a}_m^\top\mathrm{x} \leq \mathrm{b}_m$ to a point $\mathrm{d}_m = \mathrm{a}_m / (\mathrm{b}_m - \mathrm{a}_m^\top \mathrm{y}_{\rm seed})$ in the dual space; this mapping is well defined since the seed point $\mathrm{y}_{\rm seed}$ is in the interior of the primal polytope, implying that $\mathrm{b}_m-\mathrm{a}_m^\top \mathrm{y}_{\rm seed}>0$. This way, facets of the primal polytope correspond to vertices $\{\mathrm{d}_m\}$ in the dual space. Then, redundant constraints are eliminated by computing the convex hull using the \texttt{Quickhull}\footnote{http://www.qhull.org/} algorithm~\cite{quickhull} of $\{\mathrm{d}_m\}$ and retaining only those constraints whose dual points lie on the hull.

\section{Properties of the Algorithm}

This section begins by analyzing the completeness of \texttt{PathCover} followed by time complexities of the \texttt{PathCover} and \texttt{RISP} algorithms.

\subsection{Completeness of the PathCover Algorithm}

We now establish the completeness of the \texttt{PathCover} algorithm. Specifically, we state here and prove in Appendix~\ref{app:theorem1} that for any obstacle-free path $\Pi$, the algorithm is guaranteed to terminate in finite time and to produce a sequence of mutually intersecting convex polytopes that fully covers the path. To this end, we make the following assumptions.

\begin{assump} \label{ass:free_space}
The workspace $\mathcal{W} \subset \mathbb{R}^d$ is a polytope and the obstacle set $\mathcal{O} \subset \mathcal{W}$ is closed in $\mathcal{W}$.
\end{assump}

\begin{assump} \label{ass:free_path}
The line segment connecting any two successive waypoints $\mathrm{y}_{\ell-1}$ and $\mathrm{y}_\ell$ in the path $\Pi$ lies entirely in the interior of the free space $\mathcal{F} = \mathcal{W} \setminus \mathcal{O}$. 
\end{assump}
Both assumptions are commonly satisfied in practical settings. Assumption~\ref{ass:free_space} is naturally satisfied when operating within a bounding box introduced for computational purposes. It implies that the obstacle-free region $\mathcal{F}$ is open relative to $\mathcal{W}$. Assumption~\ref{ass:free_path} ensures that, for every point on the line segment connecting $\mathrm{y}_{\ell-1}$ and $\mathrm{y}_\ell$, there exists a ball centered at that point that is entirely contained in $\mathcal{F}$. With these assumptions, we now state the following theorem, which establishes that the collection of waypoints and the piecewise-linear path connecting them can be covered by finitely many polytopes. 

\begin{theorem} \label{thm:pathcover_completeness}
Let $\mathcal{W} \subset \mathbb{R}^d$ and $\mathcal{O} \subset \mathcal{W}$ satisfy Assumption~\ref{ass:free_space} and define the free space $\mathcal{F} = \mathcal{W} \setminus \mathcal{O}$. Let $\mathcal{P}$ be a finite set of points sampled from the boundaries of the obstacles $\mathcal{O}$. Let $\Pi = \{\mathrm{y}_0, \dots, \mathrm{y}_L\}$ be a collection of waypoints satisfying Assumption~\ref{ass:free_path}. Then, the \texttt{PathCover} algorithm (Algorithm \ref{alg:PathCover}) computes a finite sequence of polytopes $\mathcal{G} = \{\mathcal{H}_1, \dots, \mathcal{H}_J\}$ satisfying the path coverage conditions \ref{PCC:1}-\ref{PCC:3}.
\end{theorem}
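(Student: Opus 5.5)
The proof splits into properties of a single \texttt{RISP} call, then a progress argument for the \texttt{PathCover} loop, with \texttt{max\_iter} taken large enough never to trigger.

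\textbf{Step 1 (one \texttt{RISP} call).} Fix a seed $\mathrm{y}_{\rm seed}\in\operatorname{int}\mathcal{F}$ and run \texttt{RISP}.
\begin{enumerate}[label=(\roman*),leftmargin=*]
    \item \emph{Termination.} Each iteration removes at least the sampled point $\mathrm{p}$, since $\mathrm{a}^\top\mathrm{p}-\mathrm{b}=\alpha\|\mathrm{a}\|^2>0$ and $\mathrm{a}\neq 0$ because $\mathrm{y}_{\rm seed}\notin\mathcal{P}$. So the loop ends after at most $n$ iterations.
    \item \emph{Separation.} Every $\mathrm{q}\in\mathcal{P}$ is removed by some half-space with $\mathrm{a}^\top\mathrm{q}>\mathrm{b}$. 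Hence $\mathrm{q}$ violates that constraint strictly, which gives \ref{PCC:1}. With the workspace constraints appended, the set is a bounded intersection of finitely many half-spaces, i.e.\ a polytope with $n_j\ge d+1$. Removing redundant constraints through the dual hull leaves the set unchanged.
    \item \emph{Uniform ball.} Let $r=\operatorname{dist}(\mathrm{y}_{\rm seed},\mathcal{P})$. Each constraint contains the ball of radius $(1-\alpha)\|\mathrm{a}\|\ge(1-\alpha)r$ about the seed. Together with the workspace walls, this shows that $\mathcal{H}$ contains $B(\mathrm{y}_{\rm seed},(1-\alpha)r)\cap\mathcal{W}$.
\end{enumerate}

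\textbf{Step 2 (uniform clearance on the path).} By Assumption~\ref{ass:free_path}, each segment $[\mathrm{y}_{\ell-1},\mathrm{y}_\ell]$ is compact and lies in $\operatorname{int}\mathcal{F}$. The set $\mathcal{P}\subset\partial\mathcal{O}\subset\mathcal{O}$ is finite, using that $\mathcal{O}$ is closed by Assumption~\ref{ass:free_space}. So $\delta:=\min_\ell \operatorname{dist}(\Gamma(\Pi),\mathcal{P})>0$, and every seed on $\Gamma(\Pi)$ produces a polytope containing $B(\mathrm{y}_{\rm seed},r_0)\cap\mathcal{W}$ with $r_0=(1-\alpha)\delta$. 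I also need the path to stay away from the walls of $\mathcal{W}$, or at least that the truncated ball covers a segment piece of length at least $r_0$ in the path direction. I would get this from interiority in Assumption~\ref{ass:free_path}, reading "interior" in $\mathbb{R}^d$ so the path also has positive distance to $\partial\mathcal{W}$.

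\textbf{Step 3 (progress and termination).} Seeds are generated at $\mathrm{y}_{\rm in}$, which always lies on the current segment $[\mathrm{y}_{\ell-1},\mathrm{y}_\ell]$. By Step 2, the new polytope contains the subsegment from $\mathrm{y}_{\rm in}$ of length $\min(r_0,\|\mathrm{y}_\ell-\mathrm{y}_{\rm in}\|)$.

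\texttt{PolyLineIntersect} returns the exit point of that segment from $\mathcal{H}_{\rm new}$. This point lies in $\mathcal{H}_{\rm new}$ because the polytope is closed and convex, and convexity makes the covered part an interval starting at $\mathrm{y}_{\rm in}$. It is also at distance at least $r_0$ along the segment, or else it equals $\mathrm{y}_\ell$.

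Therefore each new polytope advances the arc-length parameter by at least $r_0$ or finishes a waypoint. So $J\le 1+L+\operatorname{length}(\Gamma)/r_0<\infty$, and the loop exits with $\mathrm{y}_L$ covered. The proof obligation is to check against the loop's bookkeeping (the update of $\mathrm{y}_{\rm in}$ and $\ell$) that it matches this description. In particular, a waypoint inside $\mathcal{H}_{\rm new}$ sets $\mathrm{y}_{\rm in}=\mathrm{y}_\ell$, which keeps the invariant that $[\mathrm{y}_0,\mathrm{y}_{\rm in}]$ along $\Gamma$ is covered by $\bigcup\mathcal{G}$.

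\textbf{Step 4 (conditions \ref{PCC:2} and \ref{PCC:3}).} Each new seed $\mathrm{y}_{\rm in}$ lies in the previous polytope and in the new one, since the seed is interior to its own polytope. This gives \ref{PCC:2}. The covered-prefix invariant at termination gives \ref{PCC:3}.

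The main obstacle is Step 3's exit-point argument, i.e.\ showing that the intersection point really advances and is not stuck at $\mathrm{y}_{\rm in}$. This is handled by the uniform radius $r_0$ from Steps 1(iii) and 2, together with convexity of $\mathcal{H}$ along the segment.
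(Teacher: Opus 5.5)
Your proposal is correct and follows essentially the same route as the paper. Steps 1 and 2 are its Lemmas 1 and 2 (every \texttt{RISP} polytope seeded on $\Gamma(\Pi)$ contains a ball of radius $(1-\alpha)$ times the path clearance, with the paper likewise reading Assumption 2 as giving positive distance to $\partial\mathcal{W}$), and Step 3 is its progress argument bounding $J$ by path length divided by that radius; your explicit tracking of $\mathrm{y}_{\rm in}$ (several seeds on one segment, \texttt{max\_iter} not binding) is, if anything, more faithful to Algorithm 1, and the uniform distance to $\partial\mathcal{W}$ is not needed for progress, since the length-$r_0$ piece of the path from the seed lies in $\mathcal{W}$ and in the closed radius-$r_0$ ball, hence in $\mathcal{H}$.
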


A proof of Theorem~\ref{thm:pathcover_completeness} is provided in Appendix~\ref{app:theorem1}. The proof shows that Algorithm~\ref{alg:PathCover} bounds the number $J$ of polytopes required to cover the path in terms of the path's geometry and clearance, rather than its combinatorial resolution. This contrasts with state-of-the-art segment-based decomposition methods such as \texttt{Decomp}~\cite{Liu2017RAL}, which construct one polytope per piecewise-linear segment and thus fix $J$ to the number of segments by design. \texttt{PathCover}, instead, allows a single polytope to span multiple consecutive segments whenever the local clearance permits. This flexibility directly reduces the number of constraints passed to the downstream planner at each replanning step.

\begin{rem} \label{rem:pathcover_practice}
Although Theorem~\ref{thm:pathcover_completeness} guarantees coverage of the entire path $\Gamma(\Pi)$, \texttt{PathCover} need not be run over the full path which can be very long in a single call; in practice it is applied in a receding-horizon fashion to the leading portion of $\Pi$ within the latest sensor measurement, with \texttt{max\_iter} capping the horizon length (cf. Section~\ref{sec:sim_results}). The completeness guarantee still applies to this local sub-path.
\end{rem}

\subsection{Complexity Analysis of the \ac{RISP} Algorithm}
\label{subsec:complexity_analysis}

The complexity of \texttt{RISP} depends on the rate at which the point cloud $\mathcal{P}$ shrinks across iterations. To formalize this, set $\mathcal{P}_0 = \mathcal{P}$ and denote its cardinality by $N_0 = |\mathcal{P}_0|$. 
Given a seed point $\mathrm{y}_\mathrm{seed}$ in the interior of $\mathcal{F}$, \texttt{RISP} samples $\mathrm{p}_0 \in \mathcal{P}_0$ (Algorithm~\ref{alg:risp}, line 4), defines the hyperplane $h_0 = \{\mathrm{x} \in \mathbb{R}^d \mid \mathrm{a}_0^\top \mathrm{x} = \mathrm{b}_0 \}$ via~\eqref{eq:plane_selection}, and discards $\mathcal{Q}_0 = \{\mathrm{q}\in\mathcal{P}_0 \mid \mathrm{a}_0^\top \mathrm{q} > \mathrm{b}_0 \}$ (Algorithm~\ref{alg:risp}, lines 7--9), yielding the set $\mathcal{P}_1 = \mathcal{P}_0 \setminus \mathcal{Q}_0$ of remaining points.
Iteratively, for each $m \geq 1$, let $\mathcal{P}_m \subset \mathcal{P}_0$ denote the remaining points. A hyperplane $h_m = \{\mathrm{x} \in \mathbb{R}^d \mid \mathrm{a}_m^\top \mathrm{x} = \mathrm{b}_m \}$ is constructed via~\eqref{eq:plane_selection} from the (uniformly) sampled point $\mathrm{p}_m \in \mathcal{P}_m$, and the set $\mathcal{Q}_m = \{\mathrm{q} \in \mathcal{P}_m \mid \mathrm{a}_m^\top \mathrm{q} > \mathrm{b}_m \}$ is removed from $\mathcal{P}_m$. Clearly, for a fixed seed $\mathrm{y}_{\rm seed}$, $h_m$ depends on $\mathrm{p}_m$, and thus the size of the eliminated set $\mathcal{Q}_m$ is determined by $\mathrm{p}_m$. The key question is therefore how often a uniformly sampled point $\mathrm{p}_m$ induces a hyperplane $h_m$ that removes a nontrivial fraction of $\mathcal{P}_m$.

This question is particularly relevant for the dense, locally coplanar point clouds produced by LiDAR and depth cameras, where samples cluster into extended near-planar regions. In such settings, a hyperplane passing near a single sample typically removes an entire neighborhood from $\mathcal{P}_m$. The following lemma formalizes this intuition and derives the resulting complexity of \texttt{RISP}; Section~\ref{subsec:empirical_support} provides empirical validation on both simulated and real-world datasets.

\begin{theorem}\label{thm:RISP_complexity}
Let $\mathcal{W}$, $\mathcal{O}$, $\mathcal{F}$ and $\mathcal{P}$ be as in Theorem~\ref{thm:pathcover_completeness}. Fix a seed $\mathrm{y}_{\rm seed}$ in the interior of $\mathcal{F}$, and set $\mathcal{P}_0 = \mathcal{P}$ with $N_0 = |\mathcal{P}_0| = n$. Apply \texttt{RISP}. 
For each iteration $m \geq 0$, let $\mathcal{P}_m \subset \mathcal{P}_0$ denote the remaining set of points with $N_m = |\mathcal{P}_m|$. Given a sampled point $\mathrm{p}_m$, define $\mathcal{P}_{m+1} = \mathcal{P}_m \setminus \mathcal{Q}_m$, where $\mathcal{Q}_m = \{\mathrm{q} \in \mathcal{P}_m \mid \mathrm{a}_m^\top \mathrm{q} > \mathrm{b}_m \}$ is the set of points removed by the hyperplane $h_m$ defined by~\eqref{eq:hyperplane}--\eqref{eq:plane_selection}.
Assume there exist constants $\beta \in (0,1)$ and $r \in (0,1]$, such that, for all $m < \tau = \inf\{m \geq 0 \mid N_m = 0\}$,
\begin{equation}\label{eq:beta-good}
  \Pr \left\{ N_{m+1} \leq (1-\beta)\,N_m \mid \mathcal{P}_m\right\} \geq r
\end{equation}
where the probability is taken over the uniform choice of $\mathrm{p}_m \in \mathcal{P}_m$. Then, \texttt{RISP} has expected time complexity $O(n)$ and worst-case time complexity $O(n^2)$.
\end{theorem}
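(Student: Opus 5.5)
The plan is to bound the cost of the elimination loop (lines 3--11 of Algorithm~\ref{alg:risp}) and the final redundancy removal separately. Iteration $m$ costs $\Theta(N_m)$: one sample, one hyperplane, and one pass over $\mathcal{P}_m$. The loop therefore costs $\Theta\big(\sum_{m<\tau} N_m\big)$.

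\textbf{Sampled point is always removed.} The sampled point $\mathrm{p}_m$ always lies in $\mathcal{Q}_m$. Indeed,
\[
\mathrm{a}_m^\top \mathrm{p}_m - \mathrm{b}_m = \|\mathrm{a}_m\|^2 - (1-\alpha)\|\mathrm{a}_m\|^2 = \alpha\|\mathrm{a}_m\|^2 > 0,
\]
where $\mathrm{a}_m \neq 0$ because $\mathrm{y}_{\rm seed}\in\mathrm{int}\,\mathcal{F}$ while $\mathrm{p}_m \in \partial\mathcal{O}\subset\mathcal{O}$. Consequences:
\begin{itemize}
\item $N_{m+1}\le N_m-1$, so $\tau\le n$ and the loop terminates.
\item $\sum_{m<\tau} N_m \le \sum_{k=1}^{n} k = O(n^2)$, which is the worst-case bound for the loop.
\item The output has at most $\tau\le n$ hyperplanes plus $O(1)$ workspace facets.
\end{itemize}

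\textbf{Expected bound.} Call iteration $m$ \emph{good} if $N_{m+1}\le(1-\beta)N_m$. Split the run into phases: phase $k$ consists of the iterations with $N_m\in\big((1-\beta)^{k+1}n,\,(1-\beta)^k n\big]$.
\begin{itemize}
\item Since $N_m$ is nonincreasing, each phase is a contiguous block of iterations.
\item A good iteration started in phase $k$ ends phase $k$, because it pushes $N$ to at most $(1-\beta)^{k+1}n$.
\item By \eqref{eq:beta-good}, conditioned on the entire history up to $m$ (which determines $\mathcal{P}_m$), each iteration is good with probability at least $r$. So the number $T_k$ of iterations spent in phase $k$ is stochastically dominated by a geometric random variable with mean $1/r$. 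The argument: $\Pr\{T_k\ge t\}\le(1-r)^{t-1}$ by iterated conditioning on $\mathcal{P}_m$ (a supermartingale/optional-stopping style argument). This gives $\mathbb{E}[T_k]\le 1/r$.
\item Each iteration in phase $k$ costs at most $(1-\beta)^k n$.
\end{itemize}
Combining,
\[
\mathbb{E}\Big[\sum_{m<\tau}N_m\Big]\le\sum_{k\ge0}\frac{(1-\beta)^k n}{r}=\frac{n}{\beta r}=O(n).
\]
Only phases with $(1-\beta)^k n\ge1$ matter, and the geometric tail makes the sum finite in any case.

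\textbf{Redundancy removal.} This is the step I expect to be the main obstacle. Quickhull on the $M=\tau+O(1)$ dual points has worst case $O(M^2)$ in low dimension, or $O(M\log M)$ in $d=2,3$. The worst-case $O(n^2)$ claim is therefore safe. For the expected bound I would argue that $\tau$ is small in expectation. A phase-counting argument in the same spirit gives $\mathbb{E}[\tau]\le \mathbb{E}[\#\text{phases}]/r\cdot$(const), which is $O(\log n/(\beta r))$. Then $\mathbb{E}[M\log M]$ is polylogarithmic, hence $o(n)$, using $M\le n$ together with Jensen/concavity; for the crude $O(M^2)$ bound one needs $\mathbb{E}[\tau^2]$, which again follows from the geometric domination of each $T_k$. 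So the hull step does not affect the $O(n)$ expectation. The delicate point is justifying the geometric domination rigorously under only the conditional hypothesis \eqref{eq:beta-good}, rather than under independence. I would handle it by conditioning step by step on $\mathcal{P}_m$ and using the tower property.
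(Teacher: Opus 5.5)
Your proof is correct. For the main expected bound it takes a genuinely different route from the paper.

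\textbf{The paper's route.} The paper argues one step at a time. Splitting on whether iteration $m$ is good gives $\mathbb{E}[N_{m+1}\mid\mathcal{P}_m]\le(1-r\beta)N_m$. Iterating gives the envelope $\mathbb{E}[N_m]\le(1-r\beta)^m n$, and summing over $m$ yields $n/(r\beta)$. It then bounds $\mathbb{E}[\tau]\le(1+\ln n)/(r\beta)$ separately by citing the multiplicative drift theorem. It handles the hull step through $\mathbb{E}[\tau\log\tau]\le(\log n)\,\mathbb{E}[\tau]$ together with the $O(\tau\log\tau)$ expected cost of \texttt{Quickhull}.

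\textbf{Your route.} Your phase decomposition reaches the same $n/(\beta r)$ differently. Each iteration of phase $k$ costs at most $(1-\beta)^k n$, and the phase length is stochastically dominated by a geometric variable of mean $1/r$. The domination is legitimate for two reasons. First, the sample is drawn with fresh randomness, so conditioning on the full history is the same as conditioning on $\mathcal{P}_m$. Second, $N_m\ge 1$ inside any phase, which puts you in the regime $m<\tau$ where the hypothesis applies.

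\textbf{What your route buys.} One self-contained argument gives both the linear work bound and $\mathbb{E}[\tau]\le(1+\ln n/\beta)/r$. This is slightly sharper than the paper's bound and needs no external drift theorem. You also control phase lengths in distribution rather than only $\mathbb{E}[N_m]$. Applying Cauchy--Schwarz over the at most $1+\ln n/\beta$ phases therefore gives $\mathbb{E}[\tau^2]=O\big(\log^2 n/(\beta r)^2\big)$. So even a hull routine with quadratic worst case stays polylogarithmic in expectation, and \texttt{Quickhull} does have that worst case, also in $d=2,3$.

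\textbf{What the paper's route buys.} It needs no stopping-time bookkeeping. It also produces exactly the envelope $\overline{n}(1-\beta^*\hat r^*)^k$ that is compared against data in the empirical section.

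\textbf{Further remarks.} Your explicit computation $\mathrm{a}_m^\top\mathrm{p}_m-\mathrm{b}_m=\alpha\|\mathrm{a}_m\|_2^2>0$, and the resulting upper bound $\sum_{m<\tau}N_m\le\sum_{k=1}^n k$, are more careful than the paper. The paper only asserts that $\mathrm{p}_m$ is removed and exhibits the extremal case. For $\mathbb{E}[M\log M]$ no Jensen step is needed: $\log M\le\log(n+O(1))$ suffices, as in the paper.
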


A proof of Theorem~\ref{thm:RISP_complexity} is presented in Appendix~\ref{app:theorem2}. 
To build intuition,~\eqref{eq:beta-good} states that, with probability at least $r$, a uniformly sampled point $\mathrm{p}_m$ induces a hyperplane $h_m$ that removes at least a fraction $\beta$ of the remaining points at iteration $m$. Essentially, $\beta$ quantifies \emph{how aggressive} a successful elimination is and $r$ measures \emph{how often} it occurs; their product $\beta r$ represents the effective drift rate that results in a geometric decay of the remaining number of points $N_m$ in the point cloud. In general, $\beta$ and $r$ depend on the seed $\mathrm{y}_{\rm seed}$ and the geometry of $\mathcal{P}$. The condition~\eqref{eq:beta-good} is particularly relevant for structured point clouds, such as locally coplanar LiDAR or depth-camera clusters, where hyperplanes induced by uniformly sampled points tend to discard a significant portion of the cluster. In such cases, both $\beta$ and $r$ can be bounded away from zero. Section~\ref{subsec:empirical_support} provides empirical estimates of these constants on three datasets and confirms the predicted geometric decay of $N_m$ from Theorem~\ref{thm:RISP_complexity}.

\begin{rem} \label{rem:worst_case}
The $O(n^2)$ worst case arises only under a highly contrived geometric configuration in which each sample eliminates exactly one point (the sampled point), a scenario that is essentially absent in real point-cloud data. In all evaluated configurations, \texttt{RISP} exhibits the linear expected performance predicted by Theorem~\ref{thm:RISP_complexity} (see Fig.~\ref{fig:complexity_decay}). 
\end{rem}

\subsection{Complexity Analysis of the PathCover Algorithm}

The following theorem characterizes the expected and worst-case computational complexities of the \texttt{PathCover} algorithm.

\begin{theorem}\label{thm:pathcover_complexity}
Let $\mathcal{W}$, $\mathcal{O}$, $\mathcal{F}$, $\mathcal{P}$ be as in Theorem~\ref{thm:pathcover_completeness} and define $n = |\mathcal{P}|$. Given a collection of waypoints $\Pi = \{y_0,\ldots,y_L\}$ as in Theorem~\ref{thm:pathcover_completeness}, let $J$ be the number of polytopes returned by the \texttt{PathCover} algorithm (Algorithm \ref{alg:PathCover}). Then, the \texttt{PathCover} algorithm has expected time complexity $O((J+L)n)$ and worst-case time complexity $O(Jn^2 + Ln)$.
\end{theorem}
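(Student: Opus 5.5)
The plan is to decompose the running time of Algorithm~\ref{alg:PathCover} into two parts: the work spent in the $J$ calls to \texttt{RISP}, and the work spent in the main \texttt{while} loop on waypoint bookkeeping. Each part is bounded separately and the bounds are then summed. By Theorem~\ref{thm:pathcover_completeness}, the algorithm terminates after a finite number of iterations and returns $J$ polytopes. Every call to \texttt{RISP} (line 3 and line 15) appends exactly one polytope to $\mathcal{G}$, so \texttt{RISP} is invoked exactly $J$ times. Each invocation receives the full point cloud $\mathcal{P}$ with $|\mathcal{P}|=n$ and some seed $\mathrm{y}_{\rm in}$ in the interior of $\mathcal{F}$. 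The seed is interior because it lies on a segment of $\Gamma(\Pi)$, which is interior to $\mathcal{F}$ by Assumption~\ref{ass:free_path}. Theorem~\ref{thm:RISP_complexity} therefore applies to each call. By linearity of expectation, the total expected \texttt{RISP} cost is $J\cdot O(n)=O(Jn)$ and the total worst-case cost is $O(Jn^2)$. Linearity holds even though $J$ is itself random: I would condition on the sequence of seeds, or note that the per-call bound is uniform in the seed, which gives $\mathbb{E}[\sum_{j\le J} T_j]\le C n\,\mathbb{E}[J]$, where $J$ is treated as the realized count. The step that may need care is that the constants $\beta,r$ in \eqref{eq:beta-good} depend on the seed. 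I would assume, as the statement implicitly does, that \eqref{eq:beta-good} holds uniformly over the seeds encountered, so that the hidden constant in $O(n)$ is uniform.

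For the remaining work, I would separate the loop iterations by branch. In the ``covered'' branch (lines 6--9), the index $\ell$ increases by one. Since $\ell$ ranges over $0,\dots,L$, this branch executes at most $L+1$ times. Its cost is one membership test $\mathrm{y}_\ell\in\mathcal{H}_{\rm new}$, which takes $O(n_j d)$ time, where $n_j$ is the number of facets of the current polytope. The \texttt{RISP} loop adds at most one hyperplane per point removed, so $n_j\le n+2d$, counting the workspace box constraints. Hence each test costs $O(n)$ and this branch contributes $O(Ln)$ in total. The ``uncovered'' branch executes exactly $J-1$ times, once per newly generated polytope. Apart from the \texttt{RISP} call, it runs \texttt{PolyLineIntersect}, which is a ray--polytope intersection costing $O(n_j d)=O(n)$, plus one membership test. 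This adds $O(Jn)$.

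One subtlety is how to charge the redundancy removal by \texttt{Quickhull} at the end of \texttt{RISP}. I would count it as part of the \texttt{RISP} cost as analysed in Theorem~\ref{thm:RISP_complexity}. Alternatively, the expected number of generated hyperplanes is $O(\log n)$ under geometric decay, so the hull cost is small in expectation. In the worst case there are up to $n$ hyperplanes, and the $O(n^2)$ \texttt{Quickhull} bound is absorbed into $O(n^2)$ per call. Summing the pieces gives expected time $O(Jn)+O(Ln)+O(Jn)=O((J+L)n)$ and worst-case time $O(Jn^2)+O(Ln)=O(Jn^2+Ln)$.

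The main obstacle I expect is rigor in the expectation step rather than the counting. $J$ is a random variable correlated with the \texttt{RISP} outcomes, and the hypothesis of Theorem~\ref{thm:RISP_complexity} is seed-dependent. I would handle this by stating the bound conditionally on $J$, with per-call expected cost at most $Cn$ for a constant $C$ that is uniform over seeds. If needed, I would apply Wald's identity to the stopping-time sum of per-call costs.
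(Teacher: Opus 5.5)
Your proposal is correct and follows the paper's proof essentially step for step. You count $J$ calls to \texttt{RISP}, each costing $O(n)$ in expectation and $O(n^2)$ in the worst case by Theorem~\ref{thm:RISP_complexity} (with \texttt{Quickhull} absorbed there), plus at most $L+J$ membership tests and $J$ line--polytope intersections, each $O(n)$ because every polytope has at most $n$ generated facets, and then sum. Your extra care about the seed dependence of $(\beta,r)$ and the randomness of $J$ addresses points the paper passes over by treating $J$ as fixed; of your two fixes, the Wald-type bound $\mathbb{E}[\sum_{j\le J}T_j]\le Cn\,\mathbb{E}[J]$ is the sound one, because the event $\{J\ge j\}$ is determined before call $j$ starts, whereas conditioning on $J$ itself can bias the per-call cost since $J$ depends on the same random samples.
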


A proof of this theorem is provided in Appendix~\ref{app:theorem3}. We briefly note here that the total computational cost of the algorithm is driven by two components: the $J$ sequential \texttt{RISP} calls, one for each polytope in the collection $\mathcal{G} = \{\mathcal{H}_1, ..., \mathcal{H}_J \}$ returned by \texttt{PathCover} (Algorithm~\ref{alg:PathCover}, line 15), and the per-polytope membership and intersection checks (Algorithm~\ref{alg:PathCover}, lines 5 and 6), whose cost scales linearly  with the size of the current polytope $\mathcal{H}_\mathrm{new}$.

\begin{table*}[b!]
\centering
\caption{Comparative Analysis of Corridor Generation for \texttt{PathCover}, \texttt{CIRI}, \texttt{Decomp}, and \ac{FIRI}}
\label{tab:performance_metric}
\setlength{\tabcolsep}{3pt}
\begin{tabular}{llccc ccc ccc}
\toprule
\multirow{2}{*}{Metric} & \multirow{2}{*}{Method} &
\multicolumn{3}{c}{Sparse} & \multicolumn{3}{c}{Medium} &
\multicolumn{3}{c}{Dense} \\
\cmidrule(lr){3-5} \cmidrule(lr){6-8} \cmidrule(lr){9-11}
& & Mean & Std & Max & Mean & Std & Max & Mean & Std & Max \\
\midrule
\multirow{4}{*}{Time (ms)}
& \texttt{PathCover}  & \textbf{4.75} & \textbf{1.18} & \textbf{7.94}
                 & \textbf{17.50} & \textbf{4.78} & \textbf{31.44}
                 & \textbf{106.30} & \textbf{28.26} & \textbf{178.48} \\
& \texttt{CIRI}       & 81.98 & 17.06 & 139.52
                 & 297.56 & 90.02 & 553.67
                 & 1286.42 & 421.19 & 2494.55 \\
& \texttt{Decomp}         & 145.49 & 32.71 & 256.77
                 & 543.72 & 172.54 & 1041.43
                 & 1951.46 & 688.95 & 4119.10 \\
& \ac{FIRI}      & 233.65 & 48.45 & 375.05
                 & 762.56 & 217.98 & 1452.92
                 & 2474.00 & 766.61 & 4716.64 \\

\cmidrule(lr){1-11}
\multirow{4}{*}{Volume per polytope $(\mathrm{m}^3)$}
& \texttt{PathCover}  & 338.30 & 202.38 & 1329.17
                 & 140.74 & 91.30 & 542.38
                 & 41.79 & 24.45 & 134.18 \\
& \texttt{CIRI}       & 471.42 & 238.84 & 1566.88
                 & 209.53 & 105.40 & 1023.36
                 & 76.36 & 136.35 & 4578.99 \\                 
& \texttt{Decomp}         & 441.81 & 235.61 & 1408.74
                 & 195.98 & 105.96 & 583.50
                 & 59.06 & 30.64 & 225.02 \\
& \ac{FIRI}      & \textbf{744.23} & \textbf{402.83} & \textbf{4345.75}
                 & \textbf{337.95} & \textbf{333.15} & \textbf{5006.61}
                 & \textbf{366.32} & \textbf{1034.95} & \textbf{5072.82} \\
\cmidrule(lr){1-11}
\multirow{4}{*}{No. of Polytopes}
& \texttt{PathCover}  & \textbf{10} & \textbf{3} & \textbf{16}
                 & \textbf{14} & \textbf{4} & \textbf{25}
                 & \textbf{21} & \textbf{6} & \textbf{36} \\
& \texttt{CIRI}       & 34 & 7 & 58
                 & 39 & 12 & 73
                 & 41 & 14 & 82 \\                 
& \texttt{Decomp}         & 34 & 7 & 58
                 & 39 & 12 & 73
                 & 41 & 14 & 82 \\
& \ac{FIRI}      & 41 & 8 & 63
                 & 43 & 12 & 77
                 & 45 & 14 & 88 \\
\bottomrule
\end{tabular}
\end{table*}

\begin{figure}[t!]
\centering
{\includegraphics[width=\columnwidth]{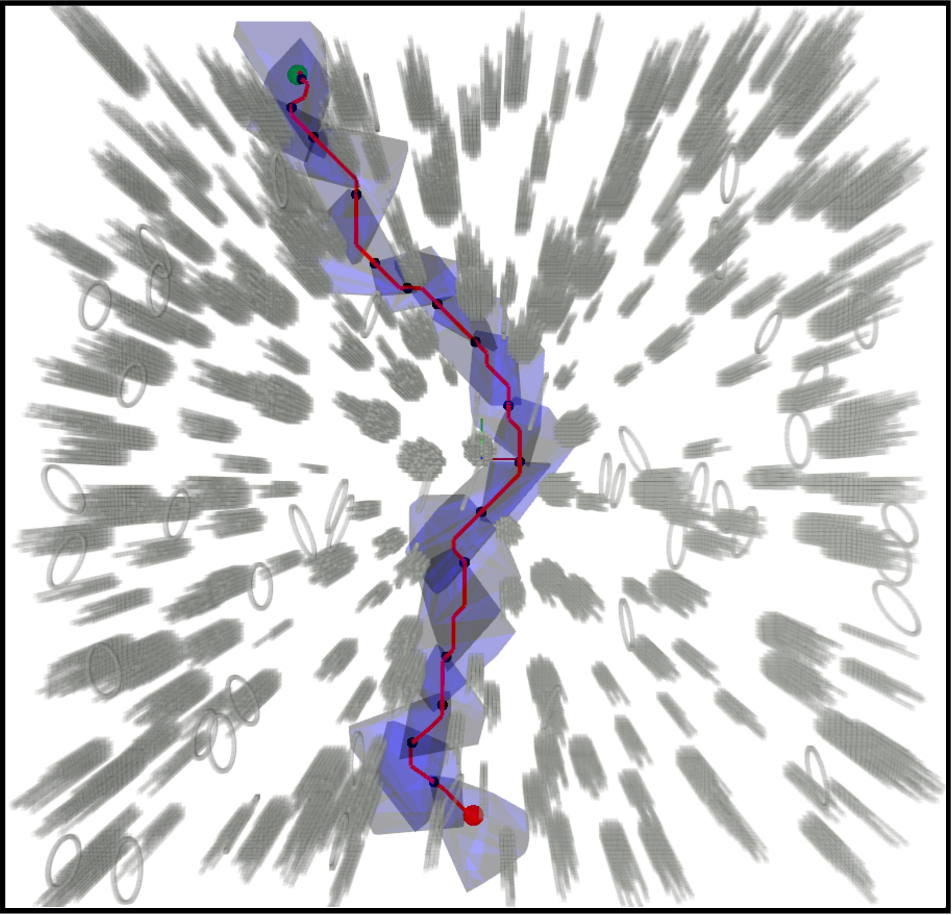}}
\vskip -5pt
\caption{Corridor generation (blue) along a path (red) in an environment with cloud size of 218302 points.}
\label{fig:sample_corridor}
\vskip -10pt
\end{figure}

\section{Performance Evaluation and Benchmarking}
\label{sec:performance_eval}

We evaluate \texttt{PathCover} and \texttt{RISP} against state-of-the-art free-space decomposition methods on two complementary benchmarks. The first measures end-to-end path coverage in real-world-scale environments and directly captures the metrics that govern online planning performance; namely, (i) computation time, (ii) coverage volume, and (iii) polytope count. The second benchmark isolates per-polytope construction cost over a much denser point cloud than those typically encountered in navigation environments, stress-testing the scalability of \texttt{RISP} itself. The baselines include four state of the art methods: \texttt{Decomp}\footnote{https://github.com/sikang/DecompROS}~\cite{Liu2017RAL}, which inflates axis-aligned boxes iteratively; \ac{FIRI}\footnote{https://github.com/ZJU-FAST-Lab/Aerobatic-Planner (A 3D implementation is available).}~\cite{Gao2025TRO}, which refines polytopes through repeated halfspace optimization seeded by a skeleton; \texttt{CIRI}\footnote{https://github.com/hku-mars/SUPER}~\cite{ren2025CIRI}, which extends \ac{FIRI}'s conic-optimization inflation to the configuration space and operates directly on raw point clouds; and \ac{IRIS}\footnote{https://github.com/rdeits/iris-distro}~\cite{Deits2015IRIS}, which alternates ellipsoid maximization with halfspace separation and serves as the quality upper bound. \texttt{Galaxy}\footnote{https://github.com/StarryN/Galaxy (A 2D implementation is available).}~\cite{Gao2020pointcloud} is included in 2D as a convex-hull-based alternative. All benchmarks, simulations and experiments in this and the subsequent sections were conducted on an Intel machine with $\mathrm{i}7{\text -}9750\mathrm{H}$ processor ($2.6$ GHz) and $16\mathrm{GB}$ RAM. Everything was implemented in C/C++; for the baseline methods, we used the publicly available implementations without algorithmic modification.

\subsection{Path Coverage Performance}
We evaluate end-to-end path coverage in a $50 \times 50 \times 5$~m workspace across three obstacle-density configurations: sparse (59857 points), medium (178562 points), and dense (549119 points); see Fig.~\ref{fig:sample_corridor} for an example. These settings are chosen to span the range of point-cloud densities encountered in typical robot navigation environments using LiDAR or Depth camera. For each configuration, we sample 100 random start--goal pairs and generate collision-free reference paths using \ac{JPS}~\cite{Harabor2011jps}. Only paths longer than $25 \mathrm{m}$ are retained, so that the resulting corridor sequences correspond to non-trivial navigation tasks. Each path is then processed by \texttt{PathCover}, \ac{FIRI}, \texttt{CIRI}, and \texttt{Decomp} to generate a sequence of \acp{SC}. For each method, we report corridor generation time, the number of polytopes per path, and per-polytope volume statistics over all generated polytopes.

Table~\ref{tab:performance_metric} reveals a consistent pattern: \texttt{PathCover} achieves the lowest computation time and produces the most compact corridor representation across all obstacle-density levels, while \ac{FIRI} generates larger individual polytopes at substantially higher computational cost. Among the optimization-based baselines, \texttt{CIRI} is the fastest, running roughly $1.5$--$2\times$ faster than \texttt{Decomp} and $2$--$3\times$ faster than \ac{FIRI} by operating directly on point clouds; nonetheless, it remains far slower than \texttt{PathCover}. At medium density, \texttt{PathCover} achieves speedups of $44\times$ over \ac{FIRI}, $31\times$ over \texttt{Decomp}, and $17\times$ over the fastest optimization-based competitor \texttt{CIRI}, while remaining below $10 \mathrm{ms}$ in sparse environments. It also generates between roughly $50\%$ (dense) and $75\%$ (sparse) fewer polytopes than the competing methods, directly reducing the number of workspace constraints passed to the downstream trajectory optimizer.

This significant reduction in computation time is accompanied by a reduction in per-polytope volume. \ac{FIRI} produces individual polytopes with roughly $2$ to $9\times$ larger mean volume depending on density, but its volume statistics exhibit high variability; in dense scenes, the standard deviation exceeds the mean. Such variability can lead to inconsistent constraint-set sizes across replanning cycles. \texttt{CIRI}, by contrast, yields volumes only marginally above \texttt{Decomp} and well below \ac{FIRI}, confining its larger free space to a smaller, more consistent margin. In contrast, \texttt{PathCover} produces more conservative but tightly concentrated polytopes. Across all three density levels, the measured generation time increases approximately linearly with point-cloud size, while \texttt{PathCover} maintains a substantial computational advantage.

\subsection{Synthetic Stress Test: Individual Polytope Scalability}

\begin{table*}[b!]
\centering
\caption{Computation Time (ms) and Area/Volume Ratios with \ac{IRIS} baseline for Individual Polytope Generation}
\label{tab:performance_metrics}
\setlength{\tabcolsep}{3pt}
\begin{tabular}{lllccc ccc ccc}
\toprule
\multirow{2}{*}{Dim.} & \multirow{2}{*}{Metric} & \multirow{2}{*}{Method}
& \multicolumn{3}{c}{Sparse} & \multicolumn{3}{c}{Medium}
& \multicolumn{3}{c}{Dense} \\
\cmidrule(lr){4-6} \cmidrule(lr){7-9} \cmidrule(lr){10-12}
& & & Mean & Std & Max & Mean & Std & Max & Mean & Std & Max \\
\midrule
\multirow{7}{*}{2-D}
& \multirow{4}{*}{Time (ms)}
& \texttt{RISP} & 0.94 & 0.46 & 2.30 &
  \textbf{6.55} & \textbf{1.75} & \textbf{10.71} &
  \textbf{45.48} & \textbf{19.30} & \textbf{94.20} \\
& & \texttt{Decomp} & \textbf{0.78} & \textbf{0.40} & \textbf{2.03} &
             9.47 & 2.68 & 15.69 &
             107.58 & 58.38 & 259.99 \\
& & \texttt{Galaxy} & 2.70 & 1.31 & 7.27 &
             35.67 & 11.30 & 65.42 &
             512.59 & 264.52 & 1118.22 \\
& & \ac{IRIS}   & 462.71 & 200.51 & 942.45 &
             2155.17 & 916.12 & 4852.21 &
             10685.06 & 4943.57 & 27320.87 \\
\cmidrule(lr){2-12}
& \multirow{3}{*}{Area ratio (\%)}
& \texttt{RISP} & 54.39 & 30.84 & 101.03 &
                  68.44 & 20.39 & 104.98 &
                  88.56 & 9.81  & 100.12 \\
& & \texttt{Decomp} & 60.42 & 30.08 & 102.29 &
             74.43 & 19.71 & 106.29 &
             92.03 & 9.62  & 102.44 \\
& & \texttt{Galaxy} & \textbf{83.53} & \textbf{24.06} & \textbf{166.15} &
             \textbf{95.13} & \textbf{17.94} & \textbf{161.32} &
             \textbf{95.08} & \textbf{13.27} & \textbf{141.28} \\
\midrule
\multirow{7}{*}{3-D}
& \multirow{4}{*}{Time (ms)}
& \texttt{RISP} & \textbf{1.47} & \textbf{0.81} & \textbf{3.63} &
  \textbf{12.48} & \textbf{3.94} & \textbf{22.40} &
  \textbf{96.37} & \textbf{45.37} & \textbf{200.61} \\
& & \texttt{Decomp} & 2.84  & 1.35   & 8.24    &
             29.25 & 10.04  & 62.23   &
             279.32 & 133.84 & 652.82 \\
& & \ac{FIRI}   & 7.47  & 3.20   & 17.16   &
             60.60 & 18.34  & 102.74  &
             656.26 & 295.79 & 1340.33 \\
& & \ac{IRIS}   & 941.23 & 270.31 & 1621.85 &
             5499.06 & 1703.42 & 9358.74 &
             35604.09 & 16277.74 & 77271.76 \\
\cmidrule(lr){2-12}
& \multirow{3}{*}{Volume ratio (\%)}
& \texttt{RISP} & 32.15 & 26.14 & 93.97 &
                  34.23 & 26.79 & 96.88 &
                  43.89 & 24.85 & 101.04 \\
& & \texttt{Decomp} & 38.95 & 29.18 & 128.30 &
             38.57 & 27.88 & 97.69  &
             48.27 & 25.47 & 102.65 \\
& & \ac{FIRI}   & \textbf{81.93} & \textbf{27.87} & \textbf{27.87}  &
             \textbf{81.24} & \textbf{25.46} & \textbf{121.29} &
             \textbf{84.97} & \textbf{22.59} & \textbf{107.90} \\
\bottomrule
\end{tabular}
\end{table*}

The path-level results reported in the previous section demonstrate end-to-end performance, but they combine single-polytope construction with path-dependent operations, such as corridor chaining, waypoint membership tests, and intersection queries. To isolate the scaling behavior of \texttt{RISP}, we constructed a controlled synthetic benchmark that stress-tests individual polytope generation over three orders of magnitude in point-cloud size. In more detail, for both 2D and 3D cases, obstacle point clouds were sampled in the fixed domain $[-50,50]~\mathrm{m}$ along each coordinate axis, with a centrally located obstacle-free safety region of side length $3~\mathrm{m}$ and a single seed at the origin. Environments were divided into three density regimes: \emph{sparse} ($1{\times}10^4$--$5{\times}10^4$ points), \emph{medium} ($1{\times}10^5$--$5{\times}10^5$ points), and \emph{dense} ($1{\times}10^6$--$5{\times}10^6$ points). For each dimension and density regime, we generated $100$ independent environments, yielding $600$ total trials.

Table~\ref{tab:performance_metrics} confirms the near-linear growth with obstacle count predicted by Theorem~\ref{thm:RISP_complexity} (and reflected in the empirical scaling curves of Fig.~\ref{fig:complexity_decay} of Section~\ref{subsec:empirical_support} below). \texttt{RISP} attains the lowest runtime in all cases except sparse 2D, while maintaining low variance across trials. Its advantage ranges from a $3$--$10\times$ speedup over \texttt{Decomp} and \ac{FIRI} to a $370\times$ speedup over \ac{IRIS} in dense 3D scenes, where \texttt{RISP} requires $96 \mathrm{ms}$ on average compared with $35.6 \mathrm{s}$ for \ac{IRIS}. This gap reflects the cost of the iterative optimization loops in \ac{IRIS} and \ac{FIRI}, whereas \texttt{Decomp} and \texttt{Galaxy} reduce runtime through faster dilation or hull construction but do not match the consistently faster computation of \texttt{RISP}.

In terms of polytope quality, \texttt{RISP} attains 40--50\% of the \ac{IRIS} baseline volume in 3D, reflecting the conservative nature of the randomized separator. \ac{FIRI} achieves the largest volume ratios, approximately 82--85\% across density levels, but at a substantially higher runtime cost. This speed--volume trade-off is central for online robot navigation, where corridor generation must fit within replanning windows of tens of milliseconds. Among the evaluated methods, \texttt{RISP} is the only one that consistently satisfies this budget across all density regimes. This computational gain is accompanied by a reduction in polytope volume. \texttt{RISP} produces more conservative regions than optimization-based methods, such as \ac{FIRI}, but avoids their expensive iterative optimization steps and therefore remains suitable for online replanning.

\section{Application to Autonomous Navigation}
\label{sec:sim_results}
We validate the complete framework on two robots, a quadrotor and a quadruped, navigating \emph{a~priori} unknown, cluttered environments using only onboard range sensing. In both cases, the robot carries a LiDAR and replans a \ac{SC} at the sensor rate: as it moves, the local point cloud $\mathcal{P}_k$ and a global reference path $\Pi_k$ (computed with \ac{JPS}~\cite{Harabor2011jps}, implementation in~\cite{Liu2017RAL}) are refreshed with every incoming scan, and \texttt{PathCover} converts the current view into a safe corridor ahead of the robot. To validate the framework, we consider two robotic systems with different planners. The first is a simulated quadrotor and the second a physical implementation on the Ghost Robotics Vision60 quadruped and serves as a hardware validation under realistic LiDAR noise. In both cases, the goal is considered reached when the robot enters a sphere of radius $0.2\,\mathrm{m}$ centered at it.

\subsection{Quadrotor: Flatness-Based Trajectory Optimization}
\label{subsec:sim_uav}

We deploy \texttt{PathCover} in a receding-horizon loop on a quadrotor simulated in Gazebo, navigating an office-like
$40\,\mathrm{m}\times 20\,\mathrm{m}\times 2\,\mathrm{m}$ space cluttered with walls and box-like obstacles that form narrow passages and dead ends (Fig.~\ref{fig:sim_drone}). At each cycle $k$, the incoming LiDAR scan is used to update the map point cloud and refresh the global path $\Pi_k$ to the goal; note here that the map size increases as the robot moves toward the goal. Starting from the robot's current position, \texttt{PathCover} is then executed with a horizon of $J_k \leq 6$ polytopes, which is automatically adapted based on the goal-polytope membership check (Algorithm~\ref{alg:PathCover}, line 5). The algorithm then generates a \ac{SC} $\mathcal{G}_k=\{\mathcal{H}_{k,1},\dots,\mathcal{H}_{k,J_k}\}$. Concurrently, it also computes the local target $\mathrm{y}_{\rm int}$, defined as the intersection of $\Pi_k$ with the terminal boundary of the corridor $\mathcal{G}_k$ (see Fig.~\ref{fig:sim_drone}). The planner then optimizes a trajectory over $\mathcal{G}_k$, of which only the initial segment is tracked before the next scan prompts the construction of a new corridor and a subsequent replanning step. The corridor slides forward with the robot at each scan, reflecting the receding-horizon operation of the planning loop.

The manner in which the planner exploits $\mathcal{G}_k$ is dictated by the robot dynamics. The quadrotor is differentially flat, with the \ac{COM} position and yaw being the flat outputs~\cite{Mellinger2011MinimumSnap}: any sufficiently smooth trajectory in these outputs is dynamically feasible and is tracked by a geometric controller~\cite{Lee2010CDC}. We can therefore plan directly in the workspace $\mathrm{y}(t)$, subject only to corridor membership constraints and kinodynamic bounds on velocity and acceleration, with each of the $J_k$ polytopes containing a corresponding segment of a smooth trajectory. 

Following~\cite{Gao2018ICRA}, the trajectory is parametrized as $J_k$ piecewise B\'ezier polynomials of degree $\nu$, with the $j$'th segment written in the Bernstein basis as
\begin{equation}\label{eq:bezier}
\mathrm{y}(t) = \sigma_{k,j}(t)=\sum_{i=0}^{\nu}\mathrm{c}^{i}_{j}\,
\beta^{i}_{j}(t),\qquad t\in[t_{k,j-1},t_{k,j}],
\end{equation}
where $\beta^{i}_{j}$ is the $i$-th Bernstein basis polynomial on $[t_{k,j-1},t_{k,j}]$ and $\mathrm{c}^{i}_{j}\in\mathbb{R}^{d}$ are the control points. Three groups of linear constraints render this trajectory feasible. First, since a B\'ezier curve lies in the convex hull of its control points, confining the control points of segment $j$ to the $j$-th corridor polytope keeps the entire segment collision-free:
\begin{equation}\label{eq:bezier_corridor}
\mathrm{c}^{i}_{j}\in\mathcal{H}_{k,j},
\qquad i\in\{0,\dots,\nu\},\; j\in\{1,\dots,J_k\}.
\end{equation}
This single substitution is the mechanism by which the corridor enters the planner: it binds each segment to one polytope of $\mathcal{G}_k$. Second, the boundary conditions pin the trajectory to the current state and the local target,
\begin{equation}\label{eq:bezier_bc}
\begin{aligned}
\sigma_{k,1}(t_{k,0})&=\mathrm{y}_k, &
\dot{\sigma}_{k,1}(t_{k,0})&=\dot{\mathrm{y}}_k,\\
\sigma_{k,J_k}(t_{k,J_k})&=\mathrm{y}_{\rm int}, &
\dot{\sigma}_{k,J_k}(t_{k,J_k})&=0
\end{aligned}
\end{equation}
while continuity up to order $r<\nu$ at each segment junction keeps the trajectory smooth across polytopes,
\begin{equation}\label{eq:bezier_cont}
\left.\frac{d^{\ell}\sigma_{k,j}}{dt^{\ell}}\right|_{t_{k,j}}
=\left.\frac{d^{\ell}\sigma_{k,j+1}}{dt^{\ell}}\right|_{t_{k,j}}
\end{equation}
where $\ell\in\{0,\dots,r\}$ and $j\in\{1,\dots,J_k{-}1\}$. Third, kinodynamic feasibility is imposed through the first and second control-point differences $\Delta\mathrm{c}^{i}_{j}=\mathrm{c}^{i}_{j}-\mathrm{c}^{i-1}_{j}$ and $\Delta^{2}\mathrm{c}^{i}_{j}=\mathrm{c}^{i}_{j}-2\mathrm{c}^{i-1}_{j}+\mathrm{c}^{i-2}_{j}$, which upper-bound the velocity and acceleration of the whole segment,
\begin{equation}\label{eq:bezier_kino}
\Big\|\tfrac{\nu\,\Delta\mathrm{c}^{i}_{j}}{\Delta t_{j}}\Big\|_\infty\!\leq v_{\max},
\qquad
\Big\|\tfrac{\nu(\nu-1)\,\Delta^{2}\mathrm{c}^{i}_{j}}{\Delta t_{j}^{2}}\Big\|_\infty\!\leq a_{\max}
\end{equation}
with $\Delta t_j=t_{k,j}-t_{k,j-1}$. Minimizing the integrated squared $\nu$-th derivative subject to these constraints,
\begin{equation}\label{eq:traj_opt}
\begin{aligned}
\min_{\{\mathrm{c}^{i}_{j}\}}\;
&\int_{t_{k,0}}^{t_{k,J_k}}\Big\|\tfrac{d^{\nu}\mathrm{y}(t)}{dt^{\nu}}\Big\|_2^2\,dt \\
\text{s.t.} \quad &\eqref{eq:bezier_corridor},\,\eqref{eq:bezier_bc},\,
\eqref{eq:bezier_cont},\,\eqref{eq:bezier_kino}
\end{aligned}
\end{equation}
yields a smooth trajectory. Here, we use a cubic polynomial and minimize its jerk. For a fixed time allocation, \eqref{eq:traj_opt} is a \ac{QP}, which we formulate in CasADi~\cite{casadi} and solve with OSQP~\cite{osqp}.

Replanning is event-driven, triggered by each scan of a simulated Velodyne VLP-16 at its maximum $20\,\mathrm{Hz}$ frame rate; the optimized trajectory is tracked by the geometric controller at $1\,\mathrm{kHz}$, and the quadrotor holds a hover command whenever a feasible corridor or trajectory is momentarily unavailable. Across the run, computing $\Pi_k$ and generating $\mathcal{G}_k$ took $16.19\,\mathrm{ms}$ on average (with standard deviation $11.99\,\mathrm{ms}$), where the slowest cycle completed in $36.91\,\mathrm{ms}$. These timings were achieved on point clouds growing from $6.2{\times}10^3$ to $3.7{\times}10^5$ points as the map builds up. Fig.~\ref{fig:sim_drone} shows a snapshot with $J_k=6$; from the LiDAR returns outlining the surrounding walls, \texttt{PathCover} builds a local 3D corridor at each replanning event, and the quadrotor traverses the narrow passages along smooth, collision-free B\'ezier trajectories.

\begin{figure*}[t!]
\centering
\subfigure[]{\includegraphics[width=0.53\textwidth]{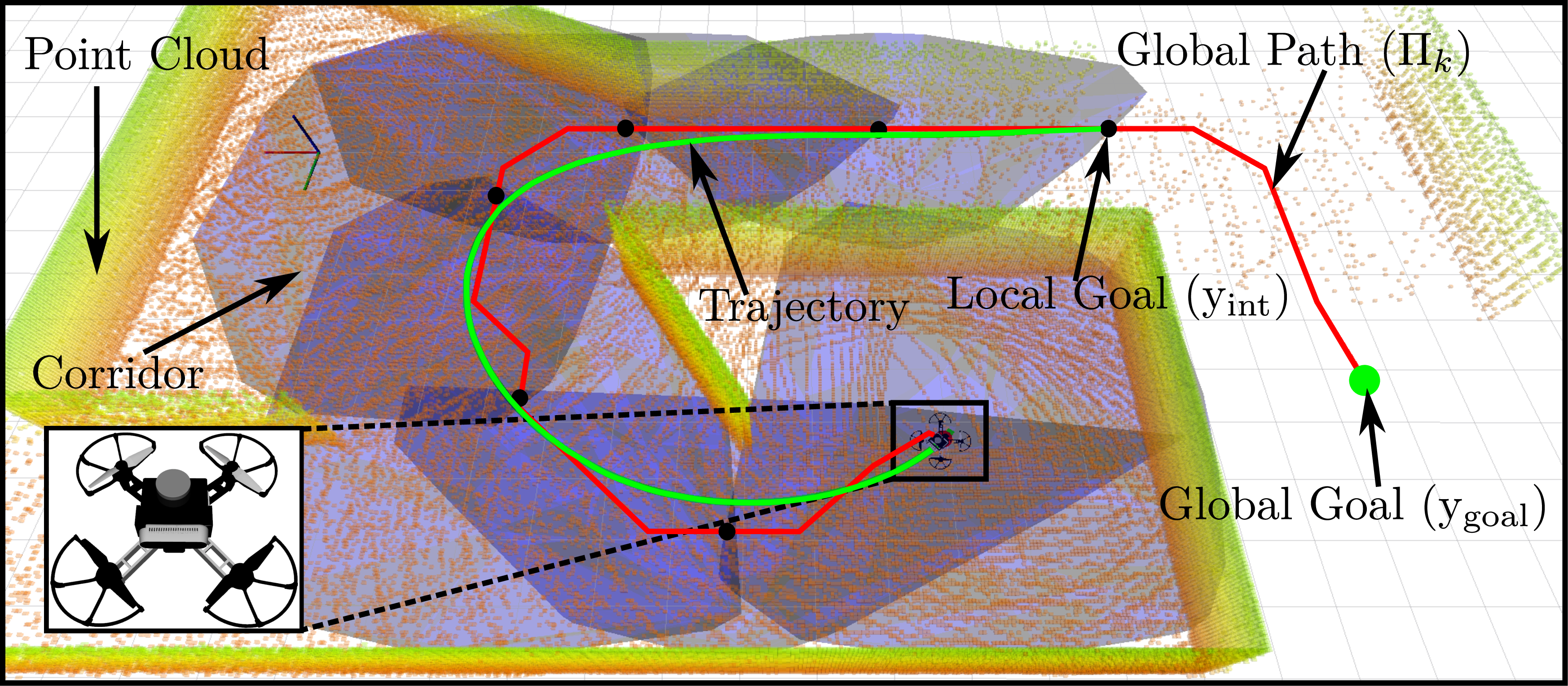}
\label{fig:sim_drone}}
\centering
\subfigure[]{\includegraphics[width=0.44\textwidth]{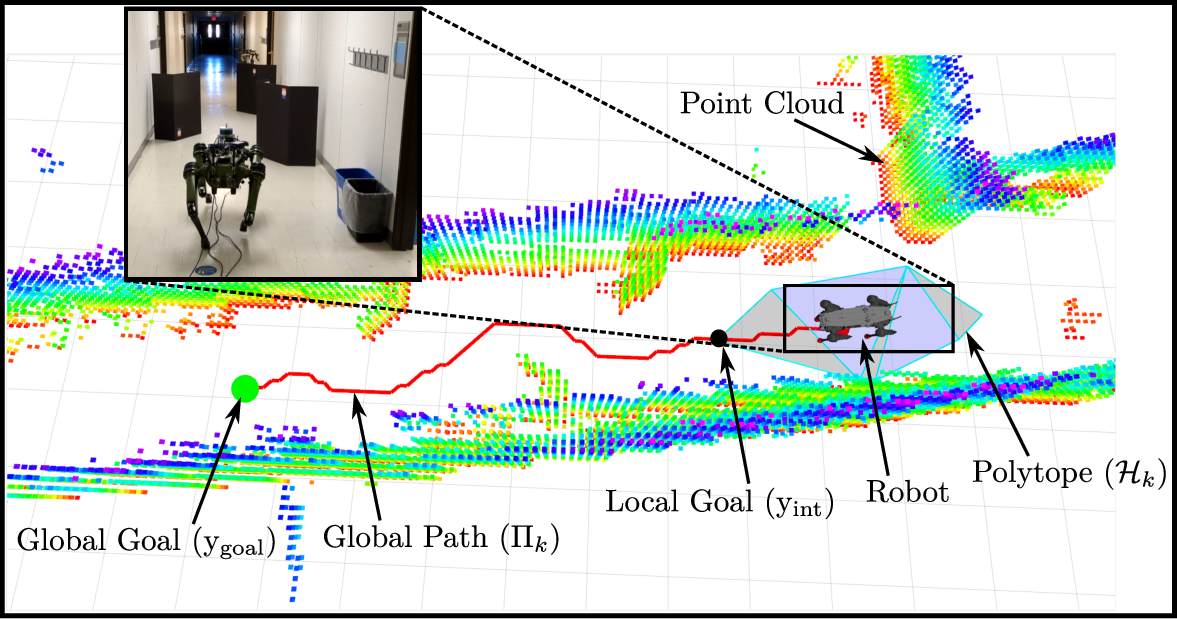}
\label{fig:exp_quadruped}}
\caption{Receding-horizon corridor planning. (a)~Quadrotor in Gazebo/RViz
tracking a smooth B\'ezier trajectory toward the local target
$\mathrm{y}_{\rm int}$, confined inside the multi-polytope safe corridor
$\mathcal{G}_k$ (blue, here $J_k=6$) generated along the global path (red)
toward $\mathrm{y}_{\rm goal}$ (green). (b)~Vision60 quadruped running the
single-polytope unicycle \ac{MPC}~\eqref{eq:mpc_unified} ($J_k=1$), tracking
$\mathrm{y}_{\rm int}$ (black) at the intersection of $\mathcal{H}_k$ and
$\Pi_k$ while advancing toward $\mathrm{y}_{\rm goal}$.}
\label{fig:drone_corridor}
\end{figure*}

\subsection{Quadruped: Unicycle-Based MPC Navigation}
\label{subsec:exp_quadruped}

We next validate the framework in hardware, on the Vision60 quadruped traversing a narrow indoor corridor formed by two parallel walls of box-shaped obstacles. The platform carries a Velodyne VLP-16 LiDAR streaming 3D point clouds at $15$--$20\,\mathrm{Hz}$; the robot's proprietary controller handles low-level joint control, while our planner issues higher-level body-velocity commands, giving a hardware validation under realistic LiDAR noise. Given the quadruped's modest speed and the short prediction horizon, a single look-ahead polytope suffices: \texttt{PathCover} is invoked with $\mathtt{max\_iter}=1$ (corridor horizon $J_k=1$ at each cycle $k$), returning one polytope $\mathcal{H}_k=\{\mathrm{y}\mid A_k\mathrm{y}\leq b_k\}$ per cycle. Unlike the quadrotor example, we retain the full robot dynamics inside the optimization and solve a receding-horizon \ac{MPC} against this polytope.

We model the quadruped's body motion with the unicycle kinematics. The state
stacks the \ac{COM} planar position and torso yaw,
$\chi_k=\begin{bmatrix}x_k & y_k & \phi_k\end{bmatrix}^\top$, and evolves under
the Euler-discretized model
\begin{equation}\label{eq:unicycle}
\chi_{k+1} = f(\chi_k, u_k) = \chi_k+\Delta t
\begin{bmatrix} v_k\cos\phi_k & v_k\sin\phi_k & \dot{\phi}_k \end{bmatrix}^\top,
\end{equation}
with body-velocity input $u_k=\begin{bmatrix}v_k & \dot{\phi}_k\end{bmatrix}^\top
\in\mathcal{U}$, step $\Delta t=0.05\,\mathrm{s}$, and $\mathcal{U}$ a compact
polytopic set encoding the saturation bounds. The corridor constraint acts on the position $\mathrm{y}_k=\begin{bmatrix}x_k & y_k\end{bmatrix}^\top$ directly.
At each cycle the robot is driven toward the desired state
$\chi_{\rm des}=\begin{bmatrix}\mathrm{y}_{\rm int}^\top & \phi_{\rm des}\end{bmatrix}^\top$, where $\mathrm{y}_{\rm int}$ is the local goal computed as in Section~\ref{subsec:sim_uav} (see Fig.~\ref{fig:exp_quadruped}) and $\phi_{\rm des}$ is the heading of the vector connecting $\mathrm{y}_k$ to $\mathrm{y}_{\rm int}$~\cite{Narkh2022RAL}. With running cost
$\mathcal{J}_\kappa(\chi_\kappa,u_\kappa)=\|\chi_\kappa-\chi_{\rm des}\|^2_Q
+\|u_\kappa\|^2_R$ and terminal cost
$\mathcal{J}_{k+H}(\chi_{k+H})=\|\chi_{k+H}-\chi_{\rm des}\|^2_{Q_f}$
($Q,R,Q_f\succ 0$), the receding-horizon problem solved at each cycle is
\vskip -5pt
\noindent\rule{\columnwidth}{0.5pt}
\vskip -15pt
\begin{subequations}
\label{eq:mpc_unified}
\begin{IEEEeqnarray}{s'rCl}
\nonumber
$\underset{X,U}{\text{minimize}}$ & \IEEEeqnarraymulticol{3}{l}
{\displaystyle\sum_{\kappa=k}^{k+H-1} \mathcal{J}_\kappa(\chi_\kappa, u_\kappa)
+ \mathcal{J}_{k+H}(\chi_{k+H})} \\
\nonumber
\text{subject to}
& \IEEEeqnarraymulticol{3}{l}{\chi_{\kappa+1} = f(\chi_k, u_k),
~~ \chi_k = \chi_{\rm init},} \\
\nonumber
& \mathrm{y}_\kappa &\in& \mathcal{H}_k, ~~ \mathrm{y}_{k+H} \in \mathcal{H}_k, ~~ u_\kappa \in \mathcal{U},
\end{IEEEeqnarray}
\end{subequations}
\vskip -10pt
\noindent\rule{\columnwidth}{0.5pt}
where $X=\{\chi_{k+1},\dots,\chi_{k+H}\}$ and $U=\{u_k,\dots,u_{k+H-1}\}$. The polytope $\mathcal{H}_k$ is held constant over the time horizon $H$ and refreshed only between cycles, at the LiDAR rate. We use a time horizon of $H=20$ steps and solve \eqref{eq:mpc_unified} with FATROP~\cite{fatrop}.

Fig.~\ref{fig:exp_quadruped} shows the Vision60 quadruped clearing the corridor: the robot tracks $\mathrm{y}_{\rm int}$ while $\mathcal{H}_k$ is regenerated in real time from each incoming scan, its boundary reshaping as the robot advances. Over the experiment, planning $\Pi_k$ and generating $\mathcal{H}_k$ took $2.93\,\mathrm{ms}$ on average (with standard deviation $0.87\,\mathrm{ms}$), and the slowest cycle completed in $7.08\,\mathrm{ms}$ on point clouds ranging from $2.6{\times}10^4$ to $1.5{\times}10^5$ points. Beyond timing considerations, the experiment supports the central claim of this section: the framework transfers from simulation to hardware without requiring any algorithmic modifications. The same \texttt{PathCover} pipeline, differing only in the downstream planner it feeds, enables continuous closed-loop navigation in unknown environments under realistic noisy sensor data.

\subsection{Empirical Support for Theorem~\ref{thm:RISP_complexity}}
\label{subsec:empirical_support}

Having validated the framework end-to-end, we revisit the theoretical analysis of \texttt{RISP} and examine whether its complexity guarantees are reflected in practice. Theorem~\ref{thm:RISP_complexity} establishes $O(n)$ expected time complexity under condition \eqref{eq:beta-good}, which requires that each iteration eliminates at least some constant fraction of the remaining points with nonzero probability. While this condition is difficult to verify analytically for arbitrary point clouds, we evaluate whether it is consistent with the empirical behavior of \texttt{RISP} across all experimental datasets. For each \texttt{RISP} call, we record the full sequence $\{N_k\}$ of remaining point counts across iterations, and compute the per-step fractional elimination $\eta_k = (N_k - N_{k+1})/N_k$. Under Theorem~\ref{thm:RISP_complexity}, a step eliminates at least $\beta$ fraction of points if $\eta_k \geq \beta$; accordingly, we define the empirical $\beta$ elimination probability as $\hat{r}(\beta) = \Pr \{\eta_k \geq \beta\}$, estimated across all steps and runs in each dataset. The effective decay rate of the geometric envelope is $\beta \hat{r}(\beta)$, which trades off the elimination threshold $\beta$ against how frequently it is achieved. We estimate the tightest such rate by computing
\begin{equation}\nonumber
\beta^*\, \;=\; \operatorname*{arg\,max}_{\beta \,\in\, (0,1)}
  \;\beta \cdot \hat{r}(\beta) ~~\text{and}~~ \hat{r}^* = \hat{r}(\beta^*)
\end{equation}
over a uniform grid of $\beta$. This directly identifies the pair $(\beta^*, \hat{r}^*)$ consistent with Theorem~\ref{thm:RISP_complexity} that yields the fastest predicted convergence. The empirical mean $\mathbb{E}[N_k]$ is then computed by averaging the sequences of $N_k$ across all runs and compared against the geometric envelope $\overline{n}(1-\beta^*\hat{r}^*)^k$ from \eqref{eq:geometric_decay} in the proof of Theorem~\ref{thm:RISP_complexity} presented in Appendix~\ref{app:theorem2}, where $\overline{n}$ is the mean initial point count.

Fig.~\ref{fig:complexity_decay} reports the empirical $\mathbb{E}[N_k]$ for both scenarios, averaged over $1709$ \texttt{RISP} calls for the Gazebo quadrotor simulation (mean initial count $\overline{n}\approx 2.02{\times}10^5$ points) and $991$ calls for the real-world quadrupedal experiments ($\overline{n}\approx 1.12{\times}10^5$ points). In both cases $\mathbb{E}[N_k]$ decays geometrically and remains below the fitted envelope $\overline{n}(1-\beta^*\hat{r}^*)^k$, with optimal effective decay rates $\beta^*\hat{r}^* = 0.063$ (quadrotor) and $\beta^*\hat{r}^* = 0.087$ (quadruped). The consistency of these estimates across datasets with different sensor characteristics, obstacle geometries, and initial point counts confirms that real LiDAR point clouds exhibit the multiplicative drift required by Theorem~\ref{thm:RISP_complexity} in a distributional sense. The key quantitative check is whether termination occurs within the drift-theoretic bound. The predicted iteration bounds $(1+\ln\overline{n})/(\beta^*\hat{r}^*)$ evaluate to $208$ and $145$ iterations for the quadrotor and quadruped clouds, respectively, whereas the observed mean termination times are $62$ and $43$ iterations, comfortably inside those bounds, as the vertical dashed lines in Fig.~\ref{fig:complexity_decay} indicate. Together, these results explain the consistent $O(n)$ runtime and $O(\log n)$ iteration count of \texttt{RISP} observed throughout both simulated and real-world experiments, and provide strong evidence that Theorem~\ref{thm:RISP_complexity} is an intrinsic property of structured sensor data.

\begin{figure}[t!]
\centering
\includegraphics[width=0.8\columnwidth]{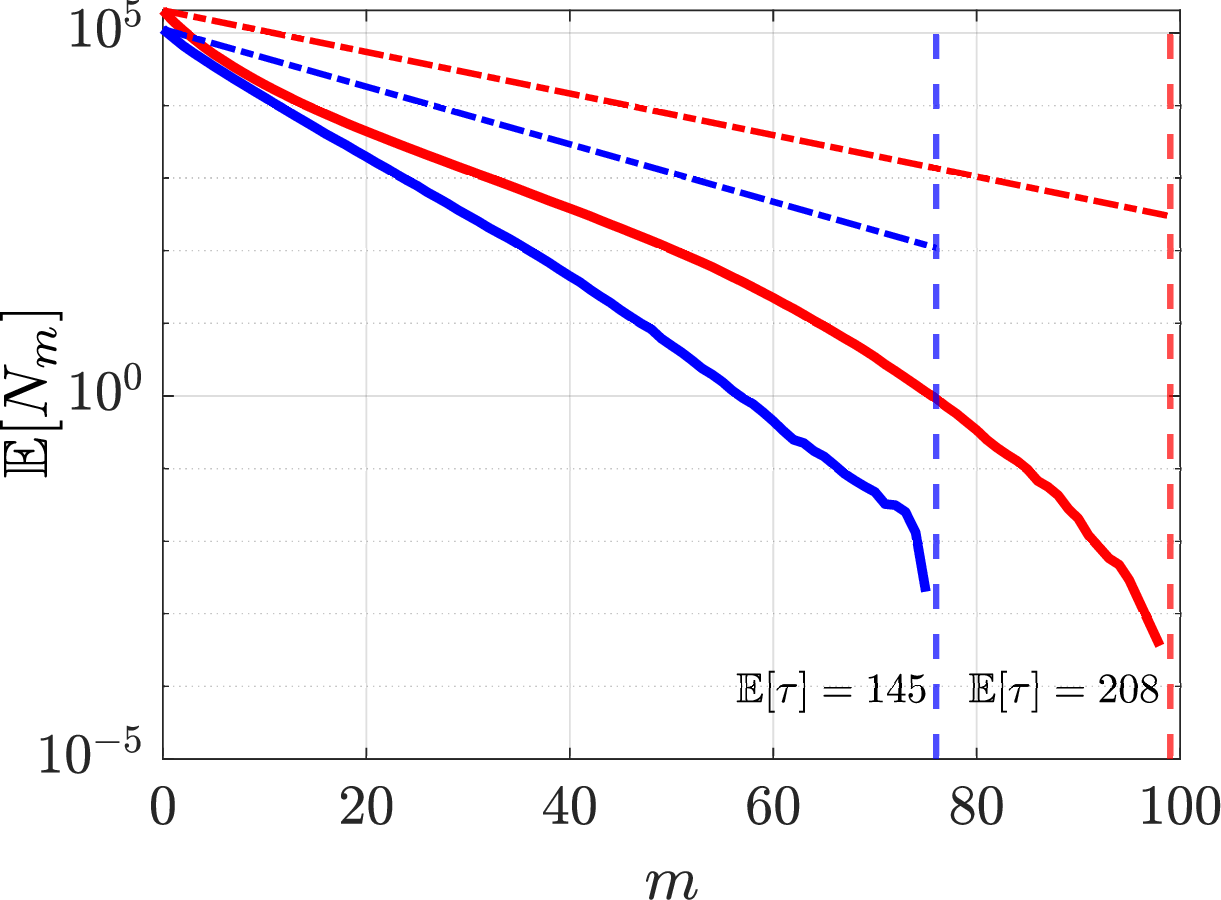}
\vskip -5pt
\caption{Empirical support for Theorem~\ref{thm:RISP_complexity}. For the Gazebo quadrotor simulation (Sec.~\ref{subsec:sim_uav}) (red) and the real-world quadrupedal experiments (Sec.~\ref{subsec:exp_quadruped}) (blue). Each solid curve is the empirical $\mathbb{E}[N_k]$ averaged over all \texttt{RISP} calls in the dataset, and the dashed curve is the corresponding fitted geometric envelope $\overline{n}(1-\beta^*\hat{r}^*)^k$ from \eqref{eq:geometric_decay}. In both cases $\mathbb{E}[N_k]$ decays geometrically and stays below its envelope, and termination (vertical dashed lines) occurs well within the predicted bound (see App.~\ref{app:theorem2}); consistent with the $O(\log n)$ expected iteration count predicted by the theorem.}
\label{fig:complexity_decay}
\vskip -15pt
\end{figure}

\section{Conclusion}

We presented \texttt{PathCover}, a path-coverage framework for generating safe corridors from point-cloud obstacle data for real-time autonomous navigation. The framework is built on the novel randomized polytope construction algorithm \ac{RISP} that separates a seed point from a finite obstacle point set and enables the sequential construction of overlapping convex polytopes along a reference path. We proved finite-time termination and path coverage under stated clearance assumptions, and established the expected linear-time complexity of \ac{RISP} under a probabilistic point-elimination condition. Benchmark evaluations show that \texttt{PathCover} substantially reduces corridor generation time compared with existing safe corridor construction methods, producing more conservative yet consistent corridor geometry. Synthetic scalability tests further support the expected near-linear behavior of \ac{RISP} over a wide range of point cloud sizes. Finally, we validated the resulting corridors in closed-loop navigation through quadrotor simulation and physical quadrupedal deployment. These results indicate that randomized point cloud corridor construction is a promising approach for sensor-rate replanning on autonomous robotic systems, particularly when predictable computation time is prioritized over maximizing individual polytope volume.

\appendices

\section{Proof of Theorem \ref{thm:pathcover_completeness}}
\label{app:theorem1}

To prove Theorem~\ref{thm:pathcover_completeness}, we first state and prove two lemmas. 

\begin{lemma}[Properties of the \texttt{RISP} output]
\label{lem:RISP_polytope_properties}
Let $\mathcal{W} \subset \mathbb{R}^d$ and $\mathcal{O} \subset \mathcal{W}$ satisfy Assumption~\ref{ass:free_space} and define the free space $\mathcal{F} = \mathcal{W} \setminus \mathcal{O}$. Let $\mathrm{y}_{\mathrm{seed}}$ be a seed point in the interior of $\mathcal{F}$, and let $\mathcal{P}$ be a finite set of points sampled from the boundaries of the obstacles $\mathcal{O}$. Choose $\alpha \in (0,1)$. Then, the output $\mathcal{H}$ of the \texttt{RISP} algorithm satisfies the following:
\begin{enumerate}[label=(\roman*)]
    \item $\mathcal{H}$ is a polytope that contains $\mathrm{y}_{\mathrm{seed}}$ in its interior;
    \item $\mathcal{H}$ is strictly separated from every point $\mathrm{p} \in \mathcal{P}$.
\end{enumerate}
\end{lemma}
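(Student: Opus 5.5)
First I will record what each iteration of the \texttt{RISP} loop does. For any sampled $\mathrm{p}\in\mathcal{P}$, the seed lies in $\mathcal{F}$ and $\mathrm{p}\in\partial\mathcal{O}\subset\mathcal{O}$ (since $\mathcal{O}$ is closed in $\mathcal{W}$ by Assumption~\ref{ass:free_space}). Hence $\mathrm{p}\neq\mathrm{y}_{\rm seed}$ and $\mathrm{a}=\mathrm{p}-\mathrm{y}_{\rm seed}\neq 0$. From \eqref{eq:plane_selection} I will compute two quantities:
\begin{itemize}
\item $\mathrm{b}-\mathrm{a}^\top\mathrm{y}_{\rm seed}=(1-\alpha)\|\mathrm{a}\|^2>0$, so the seed satisfies the new constraint strictly, with margin $(1-\alpha)\|\mathrm{a}\|_2$ in Euclidean distance;
\item $\mathrm{a}^\top\mathrm{p}-\mathrm{b}=\alpha\|\mathrm{a}\|^2>0$, so the sampled point violates it strictly and is removed in lines 6--9.
\end{itemize}
Each pass of the while loop therefore removes at least one point. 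Since $\mathcal{P}$ is finite, the loop terminates after $K\le n$ iterations with finitely many half-spaces $(\mathrm{a}_m,\mathrm{b}_m)$.

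For (ii), every $\mathrm{q}\in\mathcal{P}$ is removed at some iteration $m$. Removal happens exactly when $\mathrm{a}_m^\top\mathrm{q}>\mathrm{b}_m$, so $\mathrm{q}$ lies strictly outside the closed half-space $\{\mathrm{a}_m^\top\mathrm{x}\le\mathrm{b}_m\}\supseteq\mathcal{H}$. The hyperplane $\mathrm{a}_m^\top\mathrm{x}=\mathrm{b}_m$ thus strictly separates $\mathrm{q}$ from $\mathcal{H}$, which gives strict separation. I also have to check that the final redundancy-removal step does not spoil this. This is the delicate step: dropping a constraint whose dual point $\mathrm{d}_m$ is not a vertex of the hull could in principle enlarge the set. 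I will argue that for constraints with $\mathrm{b}_m-\mathrm{a}_m^\top\mathrm{y}_{\rm seed}>0$, the polar duality centered at $\mathrm{y}_{\rm seed}$ gives
\[
\{\mathrm{x}\mid \mathrm{d}_m^\top(\mathrm{x}-\mathrm{y}_{\rm seed})\le 1\ \forall m\}=\{\mathrm{x}\mid \mathrm{d}^\top(\mathrm{x}-\mathrm{y}_{\rm seed})\le 1\ \forall \mathrm{d}\in\mathrm{conv}\{\mathrm{d}_m\}\}.
\]
Any non-vertex $\mathrm{d}_m$ is a convex combination of the hull vertices, so its inequality is implied by theirs. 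The polytope is therefore unchanged as a set, and (ii) survives.

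For (i), boundedness comes from appending the workspace constraints. $\mathcal{W}$ is a polytope by Assumption~\ref{ass:free_space}, so $\mathcal{H}\subseteq\mathcal{W}$ is an intersection of finitely many closed half-spaces and is bounded, i.e.\ a polytope. For interiority:
\begin{itemize}
\item Each RISP half-space contains the open ball $B(\mathrm{y}_{\rm seed},(1-\alpha)\|\mathrm{a}_m\|)$.
\item Because $\mathrm{y}_{\rm seed}\in\operatorname{int}\mathcal{F}\subseteq\operatorname{int}\mathcal{W}$, each workspace facet inequality holds strictly at the seed, so it too admits a ball of positive radius.
\end{itemize}
Taking the minimum of these finitely many positive radii gives a ball around $\mathrm{y}_{\rm seed}$ contained in $\mathcal{H}$. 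The seed is therefore interior. This also justifies that the dual map is well defined, which is the positivity the redundancy argument above relies on.

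The main obstacle is a technical one: the seed being interior to the workspace. If "interior of $\mathcal{F}$" were read relative to $\mathcal{W}$, the seed could lie on $\partial\mathcal{W}$. I would handle this by interpreting interior in $\mathbb{R}^d$, as the dual-map positivity claim in Section~\ref{sec:RISP} implicitly requires, and note it explicitly.
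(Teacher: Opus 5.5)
Your proof is correct and follows the paper's argument: each generated half-space contains the ball of radius $(1-\alpha)\|\mathrm{a}_m\|_2$ about the seed, intersecting with the polytope $\mathcal{W}$ gives boundedness and interiority, and every point of $\mathcal{P}$ violates the constraint that removed it. You add two refinements, both valid and slightly tighter than the paper. First, you take that removing hyperplane directly as the strict separator, whereas the paper shows $\mathcal{H}\cap\mathcal{P}=\emptyset$ by contradiction and then cites the point/closed-convex-set separation theorem. Second, you use polar duality to check that the Quickhull redundancy step leaves the set unchanged, which the paper assumes without comment by identifying the output with $h_\star\cap\mathcal{W}$.
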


\begin{proof}
Fix $\alpha \in (0,1)$. 
We first prove part (i) by examining the process by which the \texttt{RISP} algorithm constructs polytopes. Set $\mathcal{P}_0=\mathcal{P}$. Given a seed point $\mathrm{y}_\mathrm{seed}$ in the interior of $\mathcal{F}$, execute line 4 of Algorithm~\ref{alg:risp} to select a point from $\mathcal{P}_0$ uniformly at random; let $\mathrm{p}_0$ be this point. Then, execute line 5 of Algorithm~\ref{alg:risp} to compute $\mathrm{a}_0 = \mathrm{p}_0 - \mathrm{y}_{\rm seed}$ and $\mathrm{b}_0 = \mathrm{a}^\top_0 \mathrm{y}_{\rm seed} + (1-\alpha) \| \mathrm{a}_0 \|^2_2$ by~\eqref{eq:plane_selection}. This defines the hyperplane $h_0$ generated by $\mathrm{p}_0$ according to~\eqref{eq:hyperplane}--\eqref{eq:plane_selection}; by construction, $h_0$ is orthogonal to $\mathrm{a}_0$ and passes through the point $\alpha \mathrm{y}_\mathrm{seed} + (1-\alpha) \mathrm{p}_0$. Now, let $\mathcal{B}_0(\mathrm{y}_{\rm seed})$ be the (open) ball of radius $\rho_0 = (1-\alpha) \, \|\mathrm{a}_0\|_2$ centered at the point $\mathrm{y}_\mathrm{seed}$, and define the affine function $f_0(\mathrm{x}) = \mathrm{a}_0^\top \mathrm{x} - \mathrm{b}_0$. Then, 
\begin{align}
    \nonumber
    f_0(\mathrm{x}) &= \mathrm{a}^\top_0 \mathrm{x} - \mathrm{b}_0 \\
    \nonumber
    &= \mathrm{a}^\top_0 (\mathrm{x} - \mathrm{y}_\mathrm{seed}) - (1-\alpha) \| \mathrm{a}_0 \|^2_2 \\
    \nonumber
    &\leq \| \mathrm{a}_0 \|_2 \cdot \| \mathrm{x} - \mathrm{y}_\mathrm{seed} \|_2 - (1-\alpha) \| \mathrm{a}_0 \|^2_2 \\
    \nonumber
    & = \| \mathrm{a}_0 \|_2 \left( \| \mathrm{x} - \mathrm{y}_\mathrm{seed} \|_2 -\rho_0 \right)
\end{align}
implying $f_0(\mathrm{x}) < 0$ for all $\mathrm{x} \in \mathcal{B}_0(\mathrm{y}_{\rm seed})$; note that the Cauchy-Schwarz inequality was used to pass from the second to the third line. As a result, the open ball $\mathcal{B}_0(\mathrm{y}_{\rm seed})$ is entirely contained in the closed halfspace $h^-_0 = \{\mathrm{x} \in \mathbb{R}^d ~|~ \mathrm{a}^\top_0 \mathrm{x} \leq \mathrm{b}_0 \}$ and so $\mathrm{y}_\mathrm{seed}$ is an interior point of $h^-_0$. Note also that $\mathrm{p}_0 \notin h^-_0$ since $f_0(\mathrm{p}_0) > 0$. Next, following lines 7--9 of Algorithm~\ref{alg:risp}, discard all points in $\mathcal{P}_0$ that lie in the side of $h_0$ opposite to $\mathrm{y}_{\mathrm{seed}}$; i.e., discard all points in $\mathcal{Q}_0 = \{ \mathrm{q} \in \mathcal{P}_0 ~|~ \mathrm{a}_0^{\top}\mathrm{q} > \mathrm{b}_0 \}$. Let $\mathcal{P}_1 = \mathcal{P}_0 \setminus \mathcal{Q}_0$ be the set of remaining points. 

Repeating the same procedure, at iteration $m \geq 1$, the set of remaining points is $\mathcal{P}_m = \mathcal{P}_{m-1} \setminus \mathcal{Q}_{m-1}$. A new point $\mathrm{p}_m \in \mathcal{P}_m$ is selected uniformly at random and used to generate the closed halfspace $h^-_m$, as above. This halfspace includes the open ball $\mathcal{B}_m(\mathrm{y}_\mathrm{seed})$ in its interior and excludes the corresponding generating point $\mathrm{p}_m$. Furthermore, its boundary, $h_m$, is used to discard all points in $\mathcal{P}_m$ that lie in the side of $h_m$ opposite to $\mathrm{y}_{\mathrm{seed}}$. After finitely many iterations, all points in the point cloud are exhausted, yielding a finite collection of closed halfspaces, $\{ h^-_m, ~m = 0,..., M \}$. The intersection of these halfspaces, $h_\star = \bigcap^M_{m=0} h^-_m$, is a non-empty polyhedron that includes the open ball $\mathcal{B}_\star(\mathrm{y}_\mathrm{seed}) = \bigcap^M_{m=0} \mathcal{B}_m(\mathrm{y}_\mathrm{seed})$ in its interior. 

Next, line 13 of Algorithm~\ref{alg:risp} augments the halfspace description of $h_\star$ with the constraints defining $\mathcal{W}$. Since, by Assumption~\ref{ass:free_space}, $\mathcal{W}$ is a polytope (i.e., a bounded polyhedron), $h_\star \cap \mathcal{W}$ is the bounded intersection of finitely many halfspaces, and is therefore itself a polytope $\mathcal{H}$; thus, the output $\mathcal{H} = h_\star \cap \mathcal{W}$ of the \texttt{RISP} algorithm is a polytope. To show that $\mathcal{H}$ contains $\mathrm{y}_{\mathrm{seed}}$ in its interior, note that $\mathrm{y}_\mathrm{seed}$ is an interior point of $\mathcal{F} = \mathcal{W} \setminus \mathcal{O}$. Hence, by possibly reducing the radius of $\mathcal{B}_\star(\mathrm{y}_\mathrm{seed}) \subset h_\star$ one obtains an open ball entirely included in $\mathcal{F}$, therefore in $\mathcal{W}$ and consequently in $\mathcal{H} = h_\star \cap \mathcal{W}$, thus completing the proof of part (i).

To show part (ii), note first that $h_\star$ does not include any of the points in $\mathcal{P}$. This can be seen by a contradiction argument. Suppose, \emph{ad absurdum}, that there a point $\mathrm{p}^* \in \mathcal{P}$ with $\mathrm{p}^* \in h_\star$. This implies that
\begin{equation}\label{eq:contradict}
    \mathrm{a}_m^\top \mathrm{p}^* \leq b_m, \quad \text{for all} \quad m=0, \ldots, M.    
\end{equation}
By the construction of $h_\star$, however, $\mathrm{p}^*$ was either used to generate a hyperplane that forms a boundary of $h_\star$ or has been discarded. In either case, $\mathrm{a}_m^\top \mathrm{p}^* > b_m$ for at least one $m$, which contradicts \eqref{eq:contradict}, implying that $h_\star$ cannot contain any point in $\mathcal{P}$. Since, $\mathcal{H} = h_\star \cap \mathcal{W}$ is the polytope returned by \texttt{RISP}, no point in the point cloud $\mathcal{P}$ can be in $\mathcal{H}$; that is, $\mathcal{H} \cap \mathcal{P} = \emptyset$. To show that $\mathcal{H}$ is strictly separated from every point $\mathrm{p} \in \mathcal{P}$ and complete the proof of part (ii), follow the arguments in~\cite[Example~2.20]{Boyd_cvx}, noting that $\mathcal{H}$ is a closed convex set and $\mathrm{p} \notin \mathcal{H}$ for any $\mathrm{p} \in \mathcal{P}$.
\end{proof}

In the statement of the following lemma, we use the following notation 
\begin{equation}\nonumber
    [\mathrm{y}_1, \mathrm{y}_2] = \{ (1-t) \mathrm{y}_{1} + t \mathrm{y}_{2} \mid t\in[0,1] \}
\end{equation}
to denote the line segment connecting two points $\mathrm{y}_1$ and $\mathrm{y}_2$. 

\begin{lemma}[Uniform clearance bound]
\label{lem:path_clearance}
Let $\mathcal{W}$, $\mathcal{O}$, $\mathcal{F}$, $\mathcal{P}$ and $\alpha \in (0,1)$ be as in Lemma~\ref{lem:RISP_polytope_properties}. Let $\Pi = \{\mathrm{y}_0, \dots, \mathrm{y}_L\}$ be a sequence of waypoints satisfying Assumption~\ref{ass:free_path} and define $\Gamma(\Pi) = \bigcup_{\ell=1}^{L} [\mathrm{y}_{\ell-1}, \mathrm{y}_\ell]$ as the path formed by the line segments connecting consecutive waypoints. Then, for any seed point $\mathrm{y}_\mathrm{seed} \in \Gamma(\Pi)$, the polytope $\mathcal{H}$ returned by $\mathtt{RISP}$ contains the open ball $\mathcal{B}_{\delta_\mathrm{min}}(\mathrm{y}_\mathrm{seed}),~\delta_\mathrm{min} = (1-\alpha) \epsilon_\Gamma$, where
\begin{equation} \label{eq:path_clearance}
    \epsilon_\Gamma = \min \bigl\{\min_{\mathrm{y} \in \Gamma(\Pi)}\mathtt{dist}(\mathrm{y}, \mathcal{P}), \min_{\mathrm{y} \in \Gamma(\Pi)} \mathtt{dist}(\mathrm{y}, \partial\mathcal{W})\bigr\}
\end{equation}
denotes the clearance of $\Gamma(\Pi)$ from the point cloud $\mathcal{P}$ and the boundary $\partial \mathcal{W}$ of the workspace $\mathcal{W}$.
\end{lemma}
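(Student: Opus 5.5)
The plan is to show that every halfspace $h^-_m$ generated by \texttt{RISP} from $\mathrm{y}_{\rm seed}$, together with every workspace constraint appended in line 13 of Algorithm~\ref{alg:risp}, contains the open ball $\mathcal{B}_{\delta_{\min}}(\mathrm{y}_{\rm seed})$. Since $\mathcal{H}$ is the intersection of these halfspaces, as shown in the proof of Lemma~\ref{lem:RISP_polytope_properties}, the claim follows. Removing redundant hyperplanes does not change the set $\mathcal{H}$, so it can be ignored. Note first that $\epsilon_\Gamma>0$. By Assumption~\ref{ass:free_path}, $\Gamma(\Pi)$ is a compact subset of the interior of $\mathcal{F}$, so it is disjoint from the finite set $\mathcal{P}\subset\partial\mathcal{O}\subset\mathcal{O}$ and from $\partial\mathcal{W}$. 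The distance functions are continuous, so both minima in \eqref{eq:path_clearance} are attained and positive.

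\textbf{Step 1 (sampled hyperplanes).} Fix an iteration $m$ with sampled point $\mathrm{p}_m\in\mathcal{P}$. The Cauchy--Schwarz computation in the proof of Lemma~\ref{lem:RISP_polytope_properties} gives $f_m(\mathrm{x})\le \|\mathrm{a}_m\|_2(\|\mathrm{x}-\mathrm{y}_{\rm seed}\|_2-\rho_m)$ with $\rho_m=(1-\alpha)\|\mathrm{p}_m-\mathrm{y}_{\rm seed}\|_2$. Hence $\mathcal{B}_{\rho_m}(\mathrm{y}_{\rm seed})\subset h^-_m$. Because $\mathrm{y}_{\rm seed}\in\Gamma(\Pi)$, we have $\|\mathrm{p}_m-\mathrm{y}_{\rm seed}\|_2\ge \mathtt{dist}(\mathrm{y}_{\rm seed},\mathcal{P})\ge\epsilon_\Gamma$, and therefore $\rho_m\ge(1-\alpha)\epsilon_\Gamma=\delta_{\min}$. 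The same $\delta_{\min}$ works for every $m$, regardless of which points the randomization happens to select. Intersecting over $m=0,\dots,M$ gives $\mathcal{B}_{\delta_{\min}}(\mathrm{y}_{\rm seed})\subset h_\star$.

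\textbf{Step 2 (workspace constraints).} The ball $\mathcal{B}_{\epsilon_\Gamma}(\mathrm{y}_{\rm seed})$ is connected, contains the point $\mathrm{y}_{\rm seed}\in\mathcal{W}$, and does not meet $\partial\mathcal{W}$, since $\mathtt{dist}(\mathrm{y}_{\rm seed},\partial\mathcal{W})\ge\epsilon_\Gamma$. It therefore lies in the interior of $\mathcal{W}$. Because $\delta_{\min}<\epsilon_\Gamma$, it follows that $\mathcal{B}_{\delta_{\min}}(\mathrm{y}_{\rm seed})\subset\mathcal{W}$. Combining the two steps gives $\mathcal{B}_{\delta_{\min}}(\mathrm{y}_{\rm seed})\subset h_\star\cap\mathcal{W}=\mathcal{H}$.

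The main subtlety lies in Step 2. The connectedness argument needs care, since a ball avoiding $\partial\mathcal{W}$ could a priori lie outside $\mathcal{W}$; this is excluded because the ball contains $\mathrm{y}_{\rm seed}\in\mathcal{W}$. The argument also needs the observation that a polytope $\mathcal{W}$ equals the intersection of the appended halfspace constraints, so that containment in $\mathcal{W}$ is the same as satisfying those constraints. Step 1 is essentially a uniform version of the estimate already used in Lemma~\ref{lem:RISP_polytope_properties}. The only extra point there is that $\mathrm{y}_{\rm seed}$ lies on $\Gamma(\Pi)$, which lets the path-wide clearance bound the distance to the sampled point.
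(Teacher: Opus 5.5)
Your proposal is correct and takes essentially the same route as the paper. Both arguments first show $\epsilon_\Gamma>0$ by compactness and continuity. They then use the uniform bound $\rho_m \geq (1-\alpha)\,\mathtt{dist}(\mathrm{y}_{\rm seed},\mathcal{P}) \geq \delta_{\min}$ for every sampled halfspace, and $\delta_{\min} < \epsilon_\Gamma \leq \mathtt{dist}(\mathrm{y}_{\rm seed},\partial\mathcal{W})$ for the workspace constraints. Your connectedness argument in Step 2 just makes explicit the containment $\mathcal{B}_{\delta_{\min}}(\mathrm{y}_{\rm seed}) \subset \mathcal{W}$, which the paper asserts without justification.
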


\begin{proof}
First, note that $\epsilon_\Gamma > 0$ is well defined. Indeed, since $\mathcal{P}$ is finite and $\partial \mathcal{W}$ closed, the distances in \eqref{eq:path_clearance} are given by
\begin{equation}\nonumber
    \mathtt{dist}(\mathrm{y}, \mathcal{P}) \!=\! \min_{\mathrm{p} \in \mathcal{P}} \| \mathrm{y}-\mathrm{p} \|_2
    ~~\text{and}~~
    \mathtt{dist}(\mathrm{y}, \partial \mathcal{W}) \!=\! \min_{a \in \partial \mathcal{W}} \| \mathrm{y}-a \|_2
\end{equation}
Moreover, by Assumption~\ref{ass:free_path}, each line segment $[\mathrm{y}_{\ell-1}, \mathrm{y}_\ell]$ lies in $\mathrm{int}(\mathcal{F})$, and hence $\Gamma(\Pi)$ is disjoint from both $\mathcal{P}$ and $\partial \mathcal{W}$. As a result, the distances in \eqref{eq:path_clearance} are strictly positive for all $\mathrm{y} \in \Gamma(\Pi)$. Finally, since the functions $\mathtt{dist}(\,\cdot\,,\mathcal{P})$ and $\mathtt{dist}(\,\cdot\,,\partial \mathcal{W})$ are continuous, and each segment $[\mathrm{y}_{\ell-1}, \mathrm{y}_\ell]$, and hence the path $\Gamma(\Pi)$, is compact, it follows that $\min_{\mathrm{y} \in \Gamma(\Pi)}\mathtt{dist}(\mathrm{y}, \mathcal{P})$ and $\min_{\mathrm{y} \in \Gamma(\Pi)}\mathtt{dist}(\mathrm{y}, \partial \mathcal{W})$ are attained, and hence $\epsilon_\Gamma$ is well defined and stritly positive.   

Next, consider a seed point $\mathrm{y}_{\mathrm{seed}} \in \Gamma({\Pi})$. By 
Lemma~\ref{lem:RISP_polytope_properties}, each halfspace 
$h_m^{-} = \{\mathrm{x} \in \mathbb{R}^d ~|~ \mathrm{a}_m^\top \mathrm{x} \le b_m\}$ generated by \texttt{RISP} from a sampled obstacle point $\mathrm{p}_m \in \mathcal{P}$ contains the open ball $B_{\rho_m}(\mathrm{y}_{\mathrm{seed}})$ of radius $\rho_m = (1-\alpha)\mathrm{a}_m$ where $\mathrm{a}_m = \| \mathrm{p}_m - \mathrm{y}_{\mathrm{seed}}\|_2$. Clearly,
\begin{align}
    \nonumber
    \epsilon_\Gamma \leq \min_{\mathrm{y} \in \Gamma(\Pi)} \mathtt{dist} (\mathrm{y}, \mathcal{P})
    &\leq \mathtt{dist} (\mathrm{y}_\mathrm{seed}, \mathcal{P}) \\
    \nonumber
    & = \min_{\mathrm{p} \in \mathcal{P}} \|\mathrm{y}_\mathrm{seed} - \mathrm{p}_m \|_2 \\
    \nonumber
    & \leq \|\mathrm{y}_\mathrm{seed} - \mathrm{p}_m \|_2
\end{align}
for any $\mathrm{p}_m \in \mathcal{P}$. Multiplying both sides by $1-\alpha > 0$ yields $\delta_{\min} = (1-\alpha)\epsilon_\Gamma \leq \rho_m$ for any sampled point $\mathrm{p}_m \in \mathcal{P}$. Hence, all the halfspaces generated by \texttt{RISP} by sampling points $\mathrm{p}_m \in \mathcal{P}$ contain the ball $\mathcal{B}_{\delta_\mathrm{min}}(\mathrm{y}_\mathrm{seed})$. Taking the intersection over all these halfspaces gives $B_{\delta_{\min}}(\mathrm{y}_{\mathrm{seed}}) \subset \bigcap_{m} h_m^{-} = h_\star$. 
Similarly, since $1-\alpha > 0$, we have $\delta_\mathrm{min} = (1-\alpha) \epsilon_\Gamma < \epsilon_\Gamma \leq \mathtt{dist}(\mathrm{y}_\mathrm{seed}, \partial\mathcal{W})$, implying $\mathcal{B}_{\delta_\mathrm{min}}(\mathrm{y}_\mathrm{seed}) \subset \mathcal{W}$.
Combining these results, i.e., $B_{\delta_{\min}}(\mathrm{y}_{\mathrm{seed}}) \subset h_\star$ and $\mathcal{B}_{\delta_\mathrm{min}}(\mathrm{y}_\mathrm{seed}) \subset \mathcal{W}$, with the fact that $\mathcal{H} = h_\star \cap \mathcal{W}$, as defined in the proof of Lemma~\ref{lem:RISP_polytope_properties}, results in $\mathcal{B}_{\delta_\mathrm{min}}(\mathrm{y}_\mathrm{seed}) \subset  \mathcal{H}\enspace$.
\end{proof}

We are now ready to proceed with a proof of Theorem~\ref{thm:pathcover_completeness}.

\begin{proof}[Proof of Theorem~\ref{thm:pathcover_completeness}]
Our proof begins by constructing $\mathcal{H}_1$, and then proceeds inductively to generate a sequence of polytopes $\mathcal{H}_j$, $j \geq 2$, via repeated application of Lemma~\ref{lem:RISP_polytope_properties} using seed points as specified by Algorithm~\ref{alg:PathCover}. The proof ends by showing that $J < \infty$, which implies that Algorithm~\ref{alg:PathCover} terminates after finitely many steps and produces a finite sequence of polytopes $\mathcal{G} = \{\mathcal{H}_1, \dots, \mathcal{H}_J\}$.

To construct $\mathcal{H}_1$, \texttt{RISP} is executed with seed point $\mathrm{y}^{(1)}_{\rm seed} = \mathrm{y}_0$, the first waypoint in $\Pi$, which coincides with the robot's initial position. By Lemma~\ref{lem:RISP_polytope_properties}, the polytope $\mathcal{H}_1$ returned by \texttt{RISP} contains $\mathrm{y}^{(1)}_{\rm seed}$ in its interior and is strictly separated from every point in $\mathcal{P}$, thus satisfying condition~\ref{PCC:1}. 

For $j \geq 2$ the result follows by induction. We first consider the base case $j=2$. To construct $\mathcal{H}_2$, the algorithm proceeds sequentially through the waypoints in $\Pi$ until it encounters the first waypoint that does \emph{not} lie in $\mathcal{H}_1$; denote this waypoint by $\mathrm{y}_{\ell_1}$. By construction, all preceding waypoints $\mathrm{y}_\ell \in \Pi$ for $\ell=1,...,\ell_1-1$ are contained in $\mathcal{H}_1$; i.e., are covered by $\mathcal{H}_1$. Since $\mathrm{y}_{\ell_1-1} \in \mathcal{H}_1$, $\mathrm{y}_{\ell_1} \notin \mathcal{H}_1$, and $\mathcal{H}_1$ is convex, the line segment connecting $\mathrm{y}_{\ell_1-1}$ and $\mathrm{y}_{\ell_1}$ intersects the boundary of $\mathcal{H}_1$ at a unique point. This point is used as the new seed for \texttt{RISP}, denoted by $\mathrm{y}^{(2)}_{\rm seed}$. Then, by Lemma~\ref{lem:RISP_polytope_properties}, \texttt{RISP} returns a new polytope $\mathcal{H}_2$ that contains $\mathrm{y}^{(2)}_{\rm seed}$ in its interior and is strictly separated from all points in $\mathcal{P}$; hence, $\mathcal{H}_2$ satisfies  condition~\ref{PCC:1}. 
Furthermore, since $\mathrm{y}^{(2)}_{\rm seed}$ is in the line segment connecting two consecutive waypoints $\mathrm{y}_{\ell_1-1}$ and $\mathrm{y}_{\ell_1}$, Lemma~\ref{lem:path_clearance} implies that $\mathcal{H}_2$ contains the open ball $\mathcal{B}_{\delta_{\min}}(\mathrm{y}^{(2)}_{\rm seed})$, $\delta_{\min}>0$. Since the center $\mathrm{y}^{(2)}_{\rm seed}$ of this ball belongs in the boundary of $\mathcal{H}_1$, and $\mathcal{B}_{\delta_{\min}}(\mathrm{y}^{(2)}_{\rm seed}) \subset \mathcal{H}_2$ it follows that $\mathcal{H}_1 \cap \mathcal{H}_2 \neq \emptyset$ and so $\mathcal{H}_1$ and $\mathcal{H}_2$ satisfy condition~\ref{PCC:2}  

For the inductive step, assume that, for some $j>2$, the polytopes $\mathcal{H}_2, \ldots, \mathcal{H}_j$ have been constructed and satisfy conditions \ref{PCC:1}-\ref{PCC:2}. Assume that all waypoints up to some index $\ell_j-1$ are covered by $\mathcal{H}_1$ and the collection $\mathcal{H}_i$, $i = 2, \ldots, j$, i.e., $\mathrm{y}_\ell \in \bigcup^j_{i=1} \mathcal{H}_i$ for $\ell = 0, \ldots, \ell_j-1$, and let $\mathrm{y}_{\ell_j}$ be the first waypoint in $\Pi$ that does \emph{not} lie in $\bigcup^j_{i=1} \mathcal{H}_i$. Then, exactly as in the base case above, the new seed point $\mathrm{y}^{(j+1)}_{\rm seed}$ used in \texttt{RISP} is defined as the unique intersection point between the boundary of $\mathcal{H}_j$ and the line segment joining $\mathrm{y}_{\ell_j-1}$ and $\mathrm{y}_{\ell_j}$. By Lemma~\ref{lem:RISP_polytope_properties}, \texttt{RISP} returns a new polytope $\mathcal{H}_{j+1}$ that contains $\mathrm{y}^{(j+1)}_{\rm seed}$ in its interior and satisfies \ref{PCC:1}. Furthermore, since $\mathrm{y}^{(j+1)}_{\rm seed}$ lies in the line segment connecting consecutive waypoints, Lemma~\ref{lem:path_clearance} implies that $\mathcal{H}_{j+1}$ contains the ball $\mathcal{B}_{\delta_{\min}}(\mathrm{y}^{(j+1)}_{\rm seed})$; since $\mathrm{y}^{(j+1)}_{\rm seed}$ belongs in the boundary of $\mathcal{H}_j$, we have $\mathcal{H}_j \cap \mathcal{H}_{j+1} \neq \emptyset$ and \ref{PCC:2} is satisfied. Thus, the sequence of polytopes can be extended to include $\mathcal{H}_{j+1}$, completing the inductive step.

It remains to show that $J$ is finite. The proof relies on the observation that for any collection $\{\mathcal{H}_1,...,\mathcal{H}_j\}$ with $1 \leq j < J$, constructed as above, the subsequent polytope $\mathcal{H}_{j+1}$ guarantees progress along the path $\Gamma(\Pi)$. To see this, note that the seed point $\mathrm{y}^{(j+1)}_\mathrm{seed}$ used to construct $\mathcal{H}_{j+1}$ lies in the line segment connecting the last covered waypoint $\mathrm{y}_{\ell_j-1} \in \cup^j_{i=1} \mathcal{H}_i$ with the first waypoint $\mathrm{y}_{\ell_j} \notin \cup^j_{i=1} \mathcal{H}_i$ that is not yet covered. Let $\mathbf{v}_{\ell_{j+1}}$ denote the unit vector pointing from $\mathrm{y}_{\rm seed}^{(j+1)}$ toward $\mathrm{y}_{\ell_j}$ along the line segment connecting these two points. Since by Lemma~\ref{lem:path_clearance}, the polytope $\mathcal{H}_{j+1}$ contains $\mathcal{B}_{\delta_{\min}}(\mathrm{y}_{\rm seed}^{(j+1)})$, the point $\mathrm{y}_{\rm seed}^{(j+1)} + \delta_{\min} \mathbf{v}_{\ell_{j+1}}$ lies inside $\mathcal{H}_{j+1}$, demonstrating that $\mathcal{H}_{j+1}$ extends at least distance $\delta_{\min}>0$ toward $\mathrm{y}_{\ell_j}$. Consequently, each polytope extends at least $\delta_{\min} > 0$ toward the end of the path $\Gamma(\Pi)$. Now, since at most $\lceil \|\mathrm{y}_\ell - \mathrm{y}_{\ell-1}\|_2 / \delta_{\min} \rceil$ polytopes are required per line segment, summing over all segments forming the path $\Gamma(\Pi)$ yields $J \leq J_{\rm max} = \sum_{\ell=1}^{L} \lceil \|\mathrm{y}_\ell - \mathrm{y}_{\ell-1}\|_2  / \delta_{\min} \rceil < \infty$. This implies that the \texttt{PathCover} algorithm terminates in finitely many steps, producing a sequence of polytopes $\mathcal{G} = \{\mathcal{H}_1, \ldots, \mathcal{H}_J\}$ satisfying conditions~\ref{PCC:1}--\ref{PCC:3}.
\end{proof}

\section{Proof of Theorem 2}
\label{app:theorem2}

\begin{proof}[Proof of Theorem~\ref{thm:RISP_complexity}]
Fix $\mathrm{y}_\mathrm{seed} \in \operatorname{int}(\mathcal{F})$.
We begin with the expected time complexity.
Since we process $N_m$ points at each iteration, the per-iteration cost is proportional to $N_m$ and the total cost is
\begin{equation} \nonumber
    T(n) = \sum_{m=0}^{\tau-1} N_m
\end{equation}
where $\tau = \inf\{m \geq 0 \mid N_m = 0\}$ is the maximum number of iterations until exhaustion of all points in $\mathcal{P}$. To find a bound on the expected time complexity $\mathbb{E}[T(n)]$, define the event $\mathcal{A}_m = \{\, N_{m+1} \leq (1-\beta)\,N_m\}$ that iteration $m$ eliminates at least a fraction $\beta$ of the remaining points, and let $p_m = \Pr(\mathcal{A}_m \mid \mathcal{P}_m)$ be the corresponding probability conditioned on the current point cloud $\mathcal{P}_m$; this event captures both the geometric structure and the cardinality of the point cloud.
If $\mathcal{A}_m^c$ denotes the complement of the event $\mathcal{A}_m$, the law of total expectation implies
\begin{align}
    \nonumber
    \mathbb{E}[N_{m+1} \mid \mathcal{P}_m] &= p_m \,\mathbb{E}[N_{m+1} \mid \mathcal{A}_m, \mathcal{P}_m] \\
    \label{eq:total_expect}
    &+ (1 - p_m) \, \mathbb{E}[N_{m+1} \mid \mathcal{A}_m^c, \mathcal{P}_m] \enspace .
\end{align}

For the first term in \eqref{eq:total_expect}, we use the fact that $N_{m+1} \leq (1-\beta) N_m$ on $\mathcal{A}_m$ and the monotonicity of expectation to get
\begin{equation}\label{eq:bound_1}
    \mathbb{E}[N_{m+1} \mid \mathcal{A}_m, \mathcal{P}_m] \leq \mathbb{E}[(1-\beta)N_{m} \mid \mathcal{A}_m, \mathcal{P}_m]
= (1-\beta)N_{m}
\end{equation}
where the last equality follows from $\mathbb{E}[N_{m} \mid \mathcal{A}_m, \mathcal{P}_m] = N_m$ since $|\mathcal{P}_m|=N_m$.
For the second term in \eqref{eq:total_expect}, observe that $N_{m+1} \leq N_m - 1$ deterministically, since at every iteration the point $\mathrm{p}_m$ itself is always removed. As a result,
\begin{align}
    \nonumber
    \mathbb{E}[N_{m+1} \mid \mathcal{A}_m^c,\, \mathcal{P}_m] &\leq \mathbb{E}[N_{m} - 1 \mid \mathcal{A}_m^c, \, \mathcal{P}_m]
    \\
    \label{eq:bound_2}
    &\leq \mathbb{E}[N_{m} \mid \mathcal{A}_m^c, \, \mathcal{P}_m] = N_{m}
\end{align}
where the second inequality follows from monotonicity of the expectation and the last equality follows from $\mathbb{E}[N_{m} \mid \mathcal{A}_m^c, \mathcal{P}_m] = N_m$ since $|\mathcal{P}_m|=N_m$.
Substituting the bounds \eqref{eq:bound_1} and \eqref{eq:bound_2} in \eqref{eq:total_expect} results in
\begin{align}
\nonumber
\mathbb{E}[N_{m+1} \mid \mathcal{P}_m]
&\leq p_m(1-\beta)N_m + (1 - p_m)N_m
\\
\label{eq:recursive_bound}
&= (1 - p_m\beta)N_m
\leq (1 - r\beta)N_m
\end{align}
where the last inequality follows from~\eqref{eq:beta-good}.

Now, given an instance $\mathcal{P}$ for the initial point cloud $\mathcal{P}_0$, i.e.\ $\mathcal{P}_0 = \mathcal{P}$ and $|\mathcal{P}| =|\mathcal{P}_0| = N_0 = n$, using \eqref{eq:recursive_bound} and applying induction on $m$ yields
\begin{equation}\label{eq:geometric_decay}
    \mathbb{E}[N_m] \leq (1-r\beta)^m n \enspace .
\end{equation}
Then, noting that $N_m = 0$ deterministically for all $m \geq \tau$,
\begin{equation} \nonumber
  \mathbb{E}[T(n)]
  = \sum_{m=0}^{\infty}\mathbb{E}[N_m]
  \leq n\sum_{m=0}^{\infty}(1-r\beta)^m
  = \frac{n}{r\beta} \enspace ,
\end{equation}
which, since $r>0$ and $\beta>0$, implies that $\mathbb{E}[T(n)] = O(n)$.

It remains to account for redundant-constraint removal (Algorithm~\ref{alg:risp}, line 14), whose cost is dominated by the \texttt{Quickhull} algorithm operating on the $\tau$ accumulated dual points at expected cost $O(\tau \log \tau)$~\cite{quickhull}. It thus suffices to bound $\mathbb{E}[\tau]$, which we do via the multiplicative drift theorem~\cite[Theorem~7]{Doerr2010drift}; we verify its hypotheses below.
By Algorithm~\ref{alg:risp}, $\mathcal{P}_{m+1}$ depends only on $\mathcal{P}_m$ and the uniform sample $\mathrm{p}_m\in\mathcal{P}_m$, so $\{\mathcal{P}_m\}_{m \geq 0}$ is a Markov process; since points are removed at every iteration, $\mathcal{P}_m \subseteq \mathcal{P}_0$ for all $m$, and the state space lies in the \emph{finite} power set $2^{\mathcal{P}_0}$ (of cardinality $2^n<\infty$). The cardinality map $|\cdot|: 2^{\mathcal{P}_0} \to \mathbb{Z}_{\geq 0}$ is the integer-valued, bounded potential required by~\cite[Theorem~7]{Doerr2010drift}, with first hitting time of zero $\tau = \inf\{m\geq 0 \mid N_m = 0\}$.
It remains to verify the multiplicative drift condition for $N_m$. Let $l \geq 1$ be an integer with $\Pr\{N_m = l\} > 0$; since $l \geq 1$, the event $\{N_m = l\}$ lies in $\{m < \tau\}$, where~\eqref{eq:recursive_bound} holds. Evaluating $\mathbb{E}[N_{m+1} \mid N_m]$ using the law of iterated expectations yields
\begin{align*}
\nonumber
\mathbb{E}[N_{m+1} \mid N_m]
&= \mathbb{E}\big[\,\mathbb{E}[N_{m+1} \mid \mathcal{P}_m] \,\big|\, N_m\,\big] \\
&\leq \mathbb{E}\big[(1 - r\beta)\,N_m \,\big|\, N_m\big] = (1 - r\beta)\,N_m 
\end{align*}
where the bound~\eqref{eq:recursive_bound} was used. Now conditioning the difference $N_m - N_{m+1}$ on the event $N_m = l$, we obtain
\begin{equation*}
    \mathbb{E}[N_m - N_{m+1} \mid N_m = l]
    = l - \mathbb{E}[N_{m+1} \mid N_m = l]
    \geq r\beta\,l
\end{equation*}
for all integers $l \geq 1$, which is the multiplicative drift condition of~\cite[Theorem~7]{Doerr2010drift} with rate $r\beta>0$. Hence, applying the theorem yields
\begin{equation*}
\mathbb{E}[\tau \mid \mathcal{P}_0] \leq \frac{1 + \ln(|\mathcal{P}_0|)}{r\beta} = \frac{1 + \ln n}{r\beta} = O(\log n) \enspace .
\end{equation*}
Combining with the deterministic bound $\tau \leq n$ gives
\begin{equation*}
  \mathbb{E}[\tau \log \tau \mid \mathcal{P}_0] \;\leq\; (\log n)\,\mathbb{E}[\tau \mid \mathcal{P}_0] \;=\; O(\log^2 n) \enspace .
\end{equation*}
Since $O(\log^2 n)$ is dominated by the $O(n)$ term, the overall expected time complexity of \texttt{RISP} is $O(n)$.

Finally, we turn our attention to the worst-case complexity. If at every iteration the sampled point $\mathrm{p}_m$ is the only point removed, then $\tau = n$ and
\begin{equation}\nonumber
  T(n) = n + (n-1) + \cdots + 1 = \frac{n(n+1)}{2} = O(n^2) \enspace .
\end{equation}
In this case, \texttt{Quickhull} operates on $\tau = n$ dual points and incurs a worst-case cost $O(n^2)$, which does not affect the overall $O(n^2)$ bound, completing the proof.
\end{proof}

\section{Proof of Theorem 3}
\label{app:theorem3}
Let $m_j$ denote the number of halfspaces defining the $j$-th polytope $\mathcal{H}_j$, $1 \leq j \leq J$. Since each halfspace generated by \texttt{RISP} eliminates at least one point from the input point cloud, each polytope contains at most $n$ generated halfspaces; hence $m_j \leq n$ and $\sum_{j=1}^{J} m_j \leq Jn.$

\textbf{\emph{RISP calls:}} The algorithm invokes \texttt{RISP} once for each generated polytope. By Theorem~\ref{thm:RISP_complexity}, each call has expected cost $O(n)$ and worst-case cost $O(n^2)$. Therefore, the total cost of all \texttt{RISP} calls is $O(Jn)$ in expectation and $O(Jn^2)$ in the worst case.

\textbf{\emph{Waypoints Membership tests:}} Each membership test in $\mathcal{H}_j$ requires evaluating (Algorithm~\ref{alg:PathCover}, line 6) its $m_j$ halfspace inequalities and therefore costs $O(m_j) \leq O(n)$. The main loop performs at most one membership test per iteration. There are at most $L+1$ iterations in which the waypoint index $\ell$ is advanced, and at most $J-1$ iterations in which a new polytope is generated. Thus, the total number of membership tests is at most $L+J$, giving a total cost of $O((L+J)n)$.

\textbf{\emph{Intersection queries:}} A line-polytope intersection query with $\mathcal{H}_j$ also costs $O(m_j)$. Such a query is performed only when a new polytope is generated, so the total intersection cost is bounded by $\sum_{j=1}^{J} O(m_j) \leq O(Jn)$. 

Combining these bounds, the expected running time satisfies 
\begin{equation*}
\mathbb{E}[T(n,J,L)] \leq O(Jn) + O((L+J)n) + O(Jn) = O((J+L)n). 
\end{equation*}
Similarly, the worst-case running time satisfies $T(n,J,L) = O(Jn^2) + O((L+J)n) + O(Jn) = O(Jn^2 + Ln)$, since the $O(Jn)$ terms are dominated by $O(Jn^2)$.

\bibliographystyle{IEEEtran}
\bibliography{IEEEabrv, References}

\begin{thebibliography}{10}
\providecommand{\url}[1]{#1}
\csname url@samestyle\endcsname
\providecommand{\newblock}{\relax}
\providecommand{\bibinfo}[2]{#2}
\providecommand{\BIBentrySTDinterwordspacing}{\spaceskip=0pt\relax}
\providecommand{\BIBentryALTinterwordstretchfactor}{4}
\providecommand{\BIBentryALTinterwordspacing}{\spaceskip=\fontdimen2\font plus
\BIBentryALTinterwordstretchfactor\fontdimen3\font minus \fontdimen4\font\relax}
\providecommand{\BIBforeignlanguage}[2]{{%
\expandafter\ifx\csname l@#1\endcsname\relax
\typeout{** WARNING: IEEEtran.bst: No hyphenation pattern has been}%
\typeout{** loaded for the language `#1'. Using the pattern for}%
\typeout{** the default language instead.}%
\else
\language=\csname l@#1\endcsname
\fi
#2}}
\providecommand{\BIBdecl}{\relax}
\BIBdecl

\bibitem{How2009TCST}
Y.~Kuwata, J.~Teo, G.~Fiore, S.~Karaman, E.~Frazzoli, and J.~P. How, ``Real-time motion planning with applications to autonomous urban driving,'' \emph{IEEE Trans. Control Syst. Technol.}, vol.~17, no.~5, pp. 1105--1118, 2009.

\bibitem{Havoutis2020RAL}
M.~Brandão, O.~B. Aladag, and I.~Havoutis, ``{GaitMesh}: Controller-aware navigation meshes for long-range legged locomotion planning in multi-layered environments,'' \emph{IEEE Robot. Automat. Lett.}, vol.~5, no.~2, pp. 3596--3603, 2020.

\bibitem{Kim2018RAL}
H.~Lee, H.~Kim, W.~Kim, and H.~J. Kim, ``An integrated framework for cooperative aerial manipulators in unknown environments,'' \emph{IEEE Robot. Automat. Lett.}, vol.~3, no.~3, pp. 2307--2314, 2018.

\bibitem{Gao2025Universal}
M.~Zhang, N.~Chen, H.~Wang, J.~Qiu, Z.~Han, Q.~Ren, C.~Xu, F.~Gao, and Y.~Cao, ``Universal trajectory optimization framework for differential drive robot class,'' \emph{IEEE Trans. Autom. Sci. Eng.}, vol.~22, pp. 13\,030--13\,045, 2025.

\bibitem{Park2023TRO}
J.~Park, Y.~Lee, I.~Jang, and H.~J. Kim, ``{DLSC}: Distributed multi-agent trajectory planning in maze-like dynamic environments using linear safe corridor,'' \emph{IEEE Trans. Robot.}, vol.~39, no.~5, pp. 3739--3758, 2023.

\bibitem{Mora2023RAL}
G.~Chen, S.~Wu, M.~Shi, W.~Dong, H.~Zhu, and J.~Alonso-Mora, ``{RAST}: Risk-aware spatio-temporal safety corridors for mav navigation in dynamic uncertain environments,'' \emph{IEEE Robot. Automat. Lett.}, vol.~8, no.~2, pp. 808--815, 2023.

\bibitem{How2022FASTER}
J.~Tordesillas, B.~T. Lopez, M.~Everett, and J.~P. How, ``{FASTER}: Fast and safe trajectory planner for navigation in unknown environments,'' \emph{IEEE Trans. Robot.}, vol.~38, no.~2, pp. 922--938, 2022.

\bibitem{Narkh2022RAL}
K.~S. Narkhede, A.~M. Kulkarni, D.~A. Thanki, and I.~Poulakakis, ``A sequential {MPC} approach to reactive planning for bipedal robots using safe corridors in highly cluttered environments,'' \emph{IEEE Robot. Automat. Lett.}, vol.~7, no.~4, pp. 11\,831--11\,838, 2022.

\bibitem{Gao2025TRO}
Q.~Wang, Z.~Wang, M.~Wang, J.~Ji, Z.~Han, T.~Wu, R.~Jin, Y.~Gao, C.~Xu, and F.~Gao, ``Fast iterative region inflation for computing large {2-D/3-D} convex regions of obstacle-free space,'' \emph{IEEE Trans. Robot.}, vol.~41, pp. 3223--3243, 2025.

\bibitem{Deits2015mixedInteger}
R.~Deits and R.~Tedrake, ``Efficient mixed-integer planning for {UAV}s in cluttered environments,'' in \emph{Proc. IEEE Int. Conf. Robot. Autom.}, 2015, pp. 42--49.

\bibitem{Liu2017RAL}
S.~Liu, M.~Watterson, K.~Mohta, K.~Sun, S.~Bhattacharya, C.~J. Taylor, and V.~Kumar, ``Planning dynamically feasible trajectories for quadrotors using safe flight corridors in {3-D} complex environments,'' \emph{IEEE Robot. Automat. Lett.}, vol.~2, no.~3, pp. 1688--1695, 2017.

\bibitem{Park2022RAL}
J.~Park, D.~Kim, G.~C. Kim, D.~Oh, and H.~J. Kim, ``Online distributed trajectory planning for quadrotor swarm with feasibility guarantee using linear safe corridor,'' \emph{IEEE Robot. Automat. Lett.}, vol.~7, no.~2, pp. 4869--4876, 2022.

\bibitem{Gao2022bubble}
Y.~Ren, F.~Zhu, W.~Liu, Z.~Wang, Y.~Lin, F.~Gao, and F.~Zhang, ``Bubble planner: Planning high-speed smooth quadrotor trajectories using receding corridors,'' in \emph{Proc. IEEE/RSJ Int. Conf. Intell. Robots Syst.}, 2022, pp. 6332--6339.

\bibitem{Deits2015IRIS}
R.~Deits and R.~Tedrake, ``Computing large convex regions of obstacle-free space through semidefinite programming,'' in \emph{Algorithmic Foundations of Robotics XI}, H.~L. Akin, N.~M. Amato, V.~Isler, and A.~F. van~der Stappen, Eds.\hskip 1em plus 0.5em minus 0.4em\relax Springer, 2015, pp. 109--124.

\bibitem{Tedrake2023IRISNLP}
M.~Petersen and R.~Tedrake, ``Growing convex collision-free regions in configuration space using nonlinear programming,'' \emph{arXiv:2303.14737}, 2023.

\bibitem{Tedrake2024IJRR}
H.~Dai, A.~Amice, P.~Werner, A.~Zhang, and R.~Tedrake, ``Certified polyhedral decompositions of collision-free configuration space,'' \emph{Int. J. Robot. Res.}, vol.~43, no.~9, pp. 1322--1341, 2024.

\bibitem{Tedrake2024IRISZO}
P.~Werner, T.~Cohn, R.~H. Jiang, T.~Seyde, M.~Simchowitz, R.~Tedrake, and D.~Rus, ``Faster algorithms for growing collision-free convex polytopes in robot configuration space,'' \emph{arXiv:2410.12649}, 2024.

\bibitem{ren2025CIRI}
Y.~Ren, F.~Zhu, G.~Lu, Y.~Cai, L.~Yin, F.~Kong, J.~Lin, N.~Chen, and F.~Zhang, ``Safety-assured high-speed navigation for mavs,'' \emph{Sci. Robot.}, vol.~10, no.~98, p. eado6187, 2025.

\bibitem{Kumar2025RAL}
Y.~Wu, I.~Spasojevic, P.~Chaudhari, and V.~Kumar, ``Towards optimizing a convex cover of collision-free space for trajectory generation,'' \emph{IEEE Robot. Automat. Lett.}, vol.~10, no.~5, pp. 4762--4769, 2025.

\bibitem{Sergei2017stereo}
S.~Savin, ``An algorithm for generating convex obstacle-free regions based on stereographic projection,'' in \emph{Proc. Int. Siberian Conf. Control Commun.}, 2017, pp. 1--6.

\bibitem{Gao2020pointcloud}
X.~Zhong, Y.~Wu, D.~Wang, Q.~Wang, C.~Xu, and F.~Gao, ``Generating large convex polytopes directly on point clouds,'' \emph{arXiv:2010.08744}, 2020.

\bibitem{Gao2022ICRA}
T.~Liu, Q.~Wang, X.~Zhong, Z.~Wang, C.~Xu, F.~Zhang, and F.~Gao, ``Star-convex constrained optimization for visibility planning with application to aerial inspection,'' in \emph{Proc. IEEE Int. Conf. Robot. Autom.}, 2022, pp. 7861--7867.

\bibitem{quickhull}
C.~B. Barber, D.~P. Dobkin, and H.~Huhdanpaa, ``The quickhull algorithm for convex hulls,'' \emph{ACM Trans. Math. Softw.}, vol.~22, no.~4, p. 469–483, 1996.

\bibitem{Kumar2018JMR}
M.~Kennedy, III, D.~Thakur, M.~Ani~Hsieh, S.~Bhattacharya, and V.~Kumar, ``Optimal paths for polygonal robots in {SE(2)},'' \emph{J. Mechanisms Robot.}, vol.~10, no.~2, p. 021005, 2018.

\bibitem{Gao2020TeachRepeat}
F.~Gao, L.~Wang, B.~Zhou, X.~Zhou, J.~Pan, and S.~Shen, ``{Teach-Repeat-Replan}: A complete and robust system for aggressive flight in complex environments,'' \emph{IEEE Trans. Robot.}, vol.~36, no.~5, pp. 1526--1545, 2020.

\bibitem{Lambert2022VoxelGrid}
C.~Toumieh and A.~Lambert, ``Voxel-grid based convex decomposition of {3D} space for safe corridor generation,'' \emph{J. Intell. Robot. Syst.}, vol. 105, no.~4, p.~87, 2022.

\bibitem{hornung13auro}
A.~Hornung, K.~M. Wurm, M.~Bennewitz, C.~Stachniss, and W.~Burgard, ``{OctoMap}: an efficient probabilistic {3D} mapping framework based on octrees,'' \emph{Auton. Robots}, vol.~34, no.~3, pp. 189--206, 2013.

\bibitem{Gao2018ICRA}
F.~Gao, W.~Wu, Y.~Lin, and S.~Shen, ``Online safe trajectory generation for quadrotors using fast marching method and bernstein basis polynomial,'' in \emph{Proc. IEEE Int. Conf. Robot. Autom.}, 2018, pp. 344--351.

\bibitem{Lambert2021GPU}
C.~Toumieh and A.~Lambert, ``{GPU} accelerated voxel grid generation for fast {MAV} exploration,'' \emph{arXiv:2112.13169}, 2021.

\bibitem{LaValle2006Planning}
S.~M. LaValle, \emph{Planning Algorithms}.\hskip 1em plus 0.5em minus 0.4em\relax Cambridge, UK: Cambridge University Press, 2006.

\bibitem{Harabor2011jps}
D.~Harabor and A.~Grastien, ``Online graph pruning for pathfinding on grid maps,'' in \emph{Proc. AAAI Conf. Artif. Intell.}\hskip 1em plus 0.5em minus 0.4em\relax AAAI Press, 2011, p. 1114–1119.

\bibitem{Mellinger2011MinimumSnap}
D.~Mellinger and V.~Kumar, ``Minimum snap trajectory generation and control for quadrotors,'' in \emph{Proc. IEEE Int. Conf. Robot. Autom.}, 2011, pp. 2520--2525.

\bibitem{Lee2010CDC}
T.~Lee, M.~Leok, and N.~H. McClamroch, ``Geometric tracking control of a quadrotor uav on {SE(3)},'' in \emph{Proc. IEEE Conf. Decis. Control}, 2010, pp. 5420--5425.

\bibitem{casadi}
J.~A.~E. Andersson, J.~Gillis, G.~Horn, J.~B. Rawlings, and M.~Diehl, ``{CasADi}-{A} software framework for nonlinear optimization and optimal control,'' \emph{Math. Program. Comput.}, vol.~11, no.~1, pp. 1--36, 2019.

\bibitem{osqp}
B.~Stellato, G.~Banjac, P.~Goulart, A.~Bemporad, and S.~Boyd, ``{OSQP}: an operator splitting solver for quadratic programs,'' \emph{Math. Program. Comput.}, vol.~12, no.~4, pp. 637--672, 2020.

\bibitem{fatrop}
L.~Vanroye, A.~Sathya, J.~De~Schutter, and W.~Decré, ``{FATROP}: A fast constrained optimal control problem solver for robot trajectory optimization and control,'' in \emph{Proc. IEEE/RSJ Int. Conf. Intell. Robots Syst.}, 2023, pp. 10\,036--10\,043.

\bibitem{Boyd_cvx}
S.~Boyd and L.~Vandenberghe, \emph{Convex Optimization}.\hskip 1em plus 0.5em minus 0.4em\relax Cambridge, UK: Cambridge University Press, 2004.

\bibitem{Doerr2010drift}
B.~Doerr, D.~Johannsen, and C.~Winzen, ``Drift analysis and linear functions revisited,'' in \emph{Proc. IEEE Congr. Evol. Comput.}, 2010, pp. 1--8.

\end{thebibliography}

\end{document}